\newtheorem{theorem}{Theorem}[section]            
\newtheorem{proposition}[theorem]{Proposition}
\tikzset{
    root/.style =             {align=center, text width=1cm, rounded corners=3pt, line width=0.3mm, fill=gray!10, draw=gray!80, font=\small},
    demographic/.style =         {align=center, text width=1.8cm, rounded corners=3pt, line width=0.3mm, fill=blue!10, draw=blue!80, font=\footnotesize},
    demographic_work/.style =    {align=center, text width=10cm, rounded corners=3pt, line width=0.3mm, fill=blue!10, draw=blue!0, font=\footnotesize},
    character/.style =         {align=center, text width=1.8cm, rounded corners=3pt, line width=0.3mm, fill=red!10, draw=red!80, font=\footnotesize},
    character_work/.style =    {align=center, text width=10cm, rounded corners=3pt, line width=0.3mm, fill=red!10, draw=red!0, font=\footnotesize},
    personalization/.style =           {align=center, text width=1.8cm, rounded corners=3pt, line width=0.3mm, fill=cyan!10, draw=cyan!80, font=\footnotesize},
    personalization_work/.style =      {align=center, text width=10cm, rounded corners=3pt, line width=0.3mm, fill=cyan!10, draw=cyan!0, font=\footnotesize},
    risk/.style =         {align=center, text width=1.8cm, rounded corners=3pt, line width=0.3mm, fill=orange!10, draw=orange!80, font=\footnotesize},
    risk_work/.style =    {align=center, text width=10cm, rounded corners=3pt, line width=0.3mm, fill=orange!10, draw=orange!0, font=\footnotesize},
}
\title{Co-Evolving LLM Coder and Unit Tester\\via Reinforcement Learning}
\author{Yinjie Wang$^{1*}$,~~Ling Yang$^{2,4*}$,~~Ye Tian$^{3}$,~~Ke Shen$^{4}$,~~Mengdi Wang$^{2}$\\
{\fontsize{10pt}{12pt}\selectfont $^{1}$University of Chicago} 
{\fontsize{10pt}{12pt}\selectfont$^{2}$Princeton University} {\fontsize{10pt}{12pt}\selectfont $^{3}$Peking University }  {\fontsize{10pt}{12pt}\selectfont$^{4}$ByteDance Seed }  }
\abstract{
We propose \textbf{CURE}, a novel reinforcement learning framework with a dedicated reward design that co-evolves coding and unit test generation capabilities based on their interaction outcomes, without any ground-truth code as supervision. This approach enables flexible and scalable training and allows the unit tester to learn directly from the coder’s mistakes.
Our derived \textbf{ReasonFlux-Coder} 7B and 14B models improve code generation accuracy by $5.3\%$ and Best of N accuracy by $9.0\%$ after optimization on Qwen2.5-Instruct models, outperforming similarly sized Qwen-Coder, DeepSeek-Coder, and Seed-Coder. They naturally extend to downstream tasks such as test-time scaling and agentic coding—achieving a 8.1\% improvement over the base model. For the long-CoT model, our ReasonFlux-Coder-4B consistently outperforms Qwen3-4B while achieving 64.8\% inference efficiency in unit test generation. Notably, we also find that our model can serve as an effective reward model for reinforcement learning on base models.
}
\begin{document}

\maketitle


\begin{figure}[htbp]
\vspace{-15pt}
  \centering
  \includegraphics[width=\textwidth]
  {./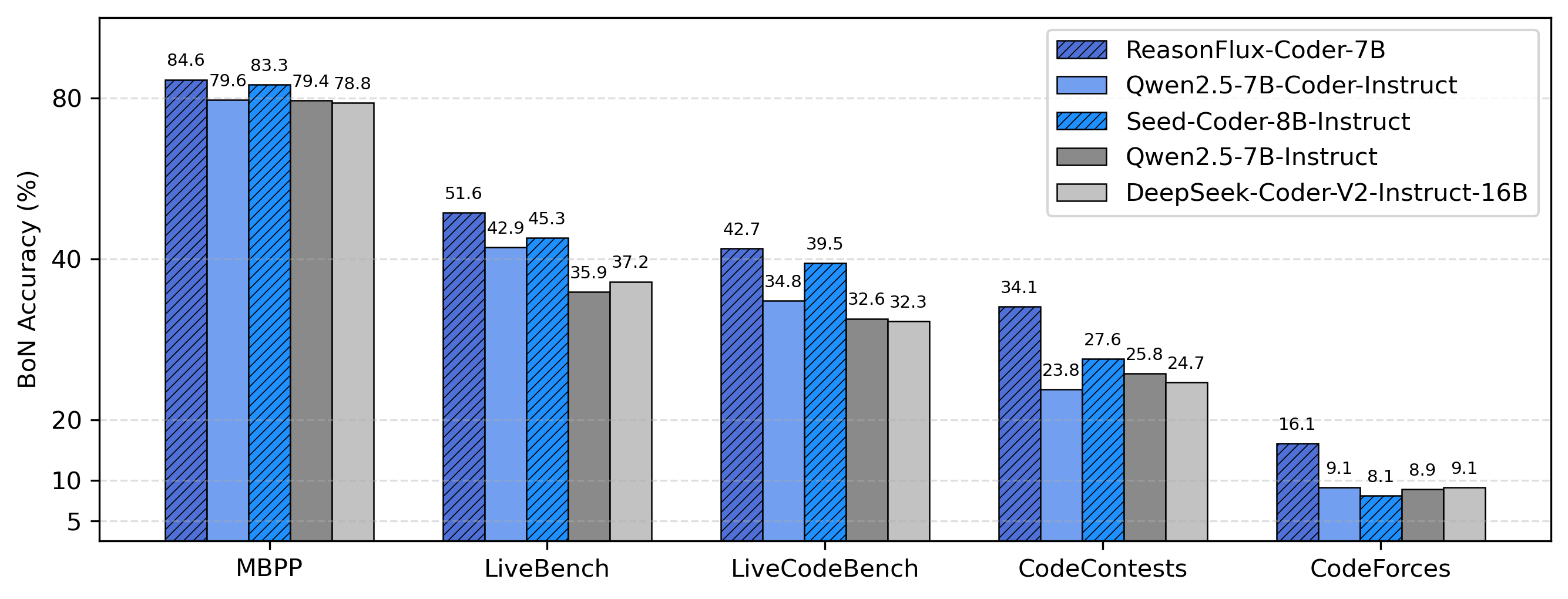}
  \caption{Performance of ReasonFlux-Coder-7B, trained with CURE on only 4.5K coding problems, surpasses models that are specifically fine-tuned on large-scale coding data. We generate 16 candidate solution codes and 16 unit tests, selecting the final solution as the one that passes the most generated unit tests, which is a BoN strategy.}
  \label{overviewplot}
\end{figure}

\newpage

\section{Introduction}
Recently, the mathematical reasoning capabilities and precision of large language models (LLMs) have seen substantial improvements through post-training optimization techniques such as reinforcement learning \citep{guo2025deepseek, jaech2024openai, shao2024deepseekmath, yu2025dapo,yang2025reasonflux}, as well as through test-time scaling methods guided by reward-based selection strategies \citep{cobbe2021training, lightman2023let, brown2024large, zhao2025genprm, liu2025inference}, including Best of N (BoN). In this paper, we focus on enhancing the coding capabilities of LLMs—a domain critical to the advancement of artificial intelligence—through both post-training optimization and test-time scaling approaches.

Beyond scaling the one-shot coding capabilities of LLMs, we identify generating unit tests as a key factor—and a promising entry point—for improving coding performance. Specifically, we focus on task-derived unit tests, which are generated from a given coding task description and are designed to verify the correctness of the corresponding code. 
We highlight several advantages of using unit tests in this context. First, their direct alignment with code correctness makes unit tests a reliable reward signal, suitable for guiding both reinforcement learning \citep{zhang2024o1, dou2024stepcoder, li2024acecoder} and test-time scaling or agentic coding pipelines \citep{ma2025dynamic, chen2022codet, huang2023enhancing, ridnik2024code, li2025s}.
Second, generated unit tests can be efficiently reused across all candidate solutions during test-time scaling, avoiding the quadratic complexity inherent in scalar or generative reward models, which require separate reward computations for each candidate \citep{cobbe2021training, lightman2023let, brown2024large, zhao2025genprm, liu2025inference}.
Most importantly, generating a unit test does not necessarily require the model to produce a complete solution or algorithm (see Figure~\ref{utexample}(a)), substantially simplifying test construction compared to traditional verification approaches, in which LLMs often struggle to verify and correct self-generated solutions \citep{huang2023large}. Moreover, using generated unit tests at inference time naturally promotes a self-check and self-correction pattern.

\begin{figure}[t!]
  \centering
  \begin{subfigure}[b]{0.95\textwidth}
    \centering
    \includegraphics[width=\textwidth]{./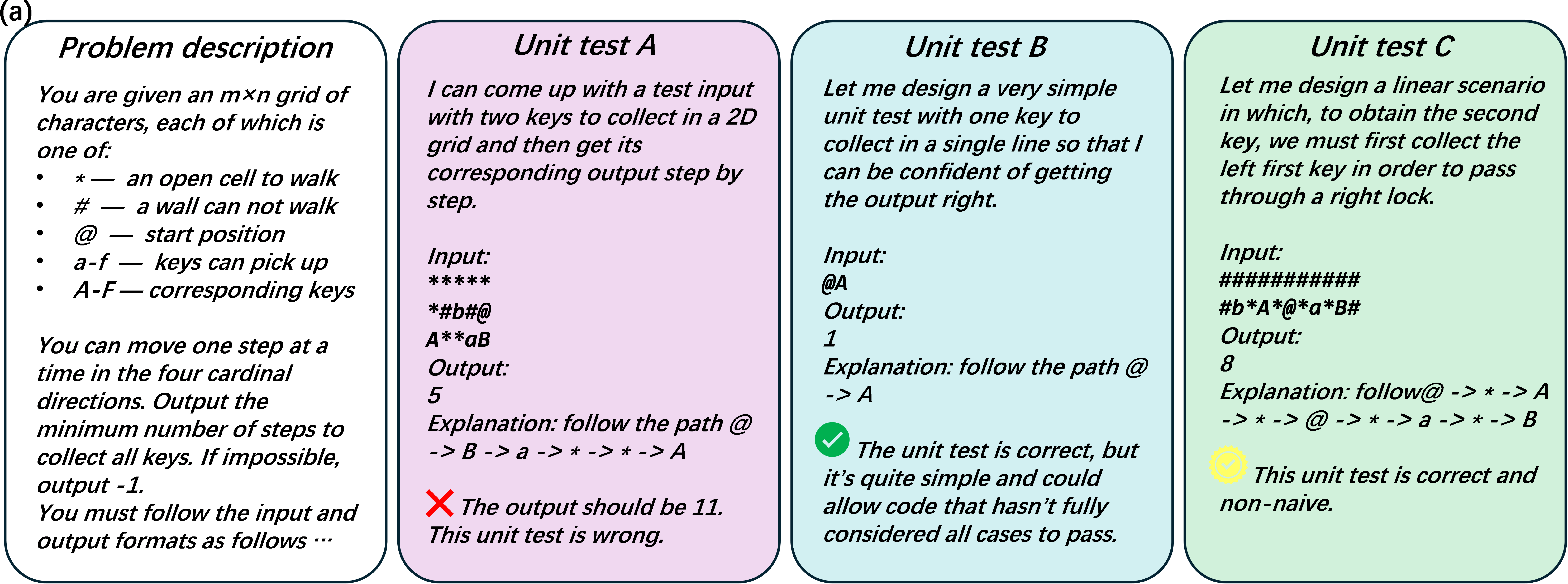}
  \end{subfigure}\\[1ex]  
  \begin{subfigure}[b]{0.95\textwidth}
    \centering
    \includegraphics[width=\textwidth]{./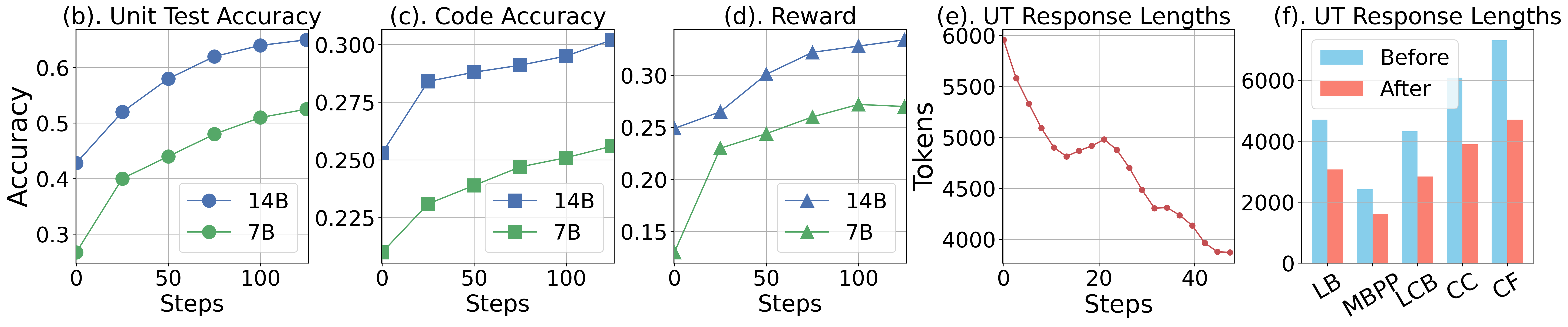}
  \end{subfigure}
  \caption{
  (a). This is an example of a problem description along with three task-derived generated unit tests. The first unit test is incorrect, although it is easily produced due to strong hallucination. The second unit test is correct but naive, allowing some incomplete or unthoughtful code to pass. The final unit test is both correct and non-naive, though generating such a test is much easier than actually solving the full coding problem. (b–d) Co-evolving process: (b) unit test accuracy, (c) code accuracy, and (d) estimated reward versus number of steps. (e-f). The Long CoT unit tester becomes increasingly efficient in reasoning as the response length decreases during optimization.
  }
  \label{utexample}
\end{figure}

Traditional unit test generation techniques include software analysis methods \citep{fraser2011evosuite, pacheco2007randoop} and machine translation-based approaches \citep{tufano2020unit, alagarsamy2024a3test}.
Recent developments show that large language models (LLMs) outperform traditional approaches in unit test generation \citep{shang2024large, yuan2023no, schafer2023empirical}, aided by prompt engineering and agentic techniques \citep{yuan2023no, chen2024chatunitest, gu2024testart}. These findings highlight the potential for fine-tuning LLMs to further enhance their unit test generation capabilities \citep{shang2024large}. O1-Coder \citep{zhang2024o1} fine-tunes LLMs using unit tests derived from ground-truth code. Inspired by the trade-off between attack rate and accuracy, UTGEN \citep{prasad2025learning} further proposes training LLMs with both correct unit tests from ground-truth code and incorrect tests from perturbed code to enhance downstream inference tasks.

However, training unit test generators in these ways requires supervision from ground-truth code solutions, whose collection is both costly and labor-intensive, thereby limiting the scale and diversity of usable training data. If a unit test generator could instead be trained without reliance on ground-truth code, this would substantially improve the flexibility and scalability of the optimization process. To this end, we propose leveraging the code generator to provide supervision for the unit test generator, while simultaneously improving the code generator itself to produce more accurate outputs that guide the generation of correct unit tests.

Motivated by this, we pose the following central research question for scaling LLMs in coding tasks:
\textbf{\textsl{Can the unit test generator and code generator coevolve effectively, without access to ground-truth code solutions, to improve LLM coding ability?}}

We answer this question affirmatively by introducing \textbf{CURE}, a novel reinforcement learning framework (Figure~\ref{pipeline}) that co-evolves a self-play agent acting as both a code generator and a unit test generator. CURE constructs a pairwise reward matrix based on interactions between generated codes and generated tests, enabling mutual supervision and continuous improvement (Figure~\ref{utexample} (b)-(d)). This setup is well-motivated: during reinforcement learning, the coder naturally produces both correct and incorrect solutions, with the incorrect ones revealing typical failure modes. These, in turn, offer valuable opportunities for the unit test generator to learn to distinguish good code from bad code. 

We further demonstrate the utility of the optimized model in two settings. First, and most importantly, it effectively enhances one-shot coding, unit test generation, test-time scaling and agentic coding ability. Second, we find that using the optimized model to generate unit tests, as a reward model for reinforcement learning on the base model, can lead to competitive improvements compared to using ground-truth labeled unit tests. Finally, while long-chain-of-thought (long-CoT) models represent some of the most advanced AI capabilities to date, they suffer from extremely slow inference \citep{yeo2025demystifying, wang2025dynamic, guo2025deepseek}. To address this, we introduce a response-length-guided transformation on the reward to make the long-CoT unit test generator more efficient in test-time applications.

We summarize our contributions as follows:

1. We propose \textbf{CURE}, a novel co-evolving reinforcement learning framework that enables a single model to simultaneously excel at unit test generation and coding, without access to any ground-truth code solutions. The framework employs a theoretically derived and well-motivated reward design for unit test generation. In addition, for long-chain-of-thought models, we introduce a response-length-guided reward transformation to enhance test-time efficiency of the fine-tuned unit test generator. This results in models of different scales: \textbf{ReasonFlux-Coder-4B, 7B and 14B}.
    
2. We conduct extensive evaluations on five benchmarks and demonstrate that CURE effectively enhances the abilities of the model in unit test generation and coding, naturally extends to test-time scaling and agentic coding tasks and agentic unit test generation tasks. Specifically, ReasonFlux-Coder 7B and 14B models improve code generation accuracy by $5.3\%$ and Best of N accuracy by $9.0\%$ after optimization on Qwen2.5-Instruct models, outperforming similarly sized Qwen-Coder, DeepSeek-Coder, and Seed-Coder. Our long-CoT 4B model consistently outperforms Qwen3-4B while achieving 64.8\% inference efficiency in unit test generation.

3. Finally, we show that the trained unit test generator can serve as a reward model to fine-tune LLMs via reinforcement learning—improving coding performance without any human-labeled or ground-truth unit test supervision.

\section{Related Work}

\subsection{Unit Test Generation}
Manually creating unit tests is costly and inefficient \citep{chen2022codet, liu2023your}, motivating the development of automated unit test generation methods, such as software analysis methods \citep{fraser2011evosuite, pacheco2007randoop, enoiu2016automated, gargantini1999using, pǎsǎreanu2008combining, execution2005symstra} and traditional neural machine translation approaches \citep{tufano2020unit, alagarsamy2024a3test}.
With the recent advancements in LLMs, prompt-based and agentic methods \citep{yuan2023no, chen2024chatunitest, gu2024testart} have demonstrated superior performance, further highlighting the potential of training LLMs for unit test generation. In light of this, methods like O1-Coder \citep{zhang2024o1} and UTGEN \citep{prasad2025learning} construct datasets using ground-truth code solutions to fine-tune LLMs for better unit test generation. However, relying on ground-truth code solutions in the training data limits both flexibility and scalability.

\subsection{Application of Unit Tests}
Unit tests have been shown to serve as effective rewards for test-time scaling and agentic coding \citep{ma2025dynamic}. A common strategy is to generate multiple code and unit test candidates using the model, then select the best-performing sample based on execution results against the generated unit tests \citep{chen2022codet, huang2023enhancing}. AlphaCodium \citep{ridnik2024code} introduces self-revision by leveraging both public and generated tests to refine solutions. S* \citep{li2025s} further incorporates iterative debugging and pairwise discrimination guided by generated unit tests to enhance final code quality.

\subsection{Reinforcement Learning for LLM Improvement}
Proximal Policy Optimization (PPO) \citep{schulman2017proximal} uses an actor-critic setup with clipped updates for stability. Direct Preference Optimization (DPO) and its variants \citep{rafailov2023direct, liu2020ipo, cen2024value, meng2024simpo, xu2024contrastive,chakraborty2024maxmin,wang2025scoreflow} skip the critic and directly optimize from preferences using closed-form rewards, improving efficiency. Recent efficient Group Relative Policy Optimization (GRPO) \citep{shao2024deepseekmath} scales well with large-scale reinforcement learning \citep{guo2025deepseek,yang2025mmada}. Reinforcement learning applied specifically to coding tasks has also gained traction \citep{dou2024stepcoder, li2024acecoder}. 
We do not aim to compete with existing reinforcement learning algorithms for code generation; in fact, these RL-on-coding methods can be naturally integrated into our co-evolutionary framework by directly applying them to optimize the coding component.

\begin{figure}[t!]
  \centering
  \includegraphics[width=1.0\textwidth]
  {./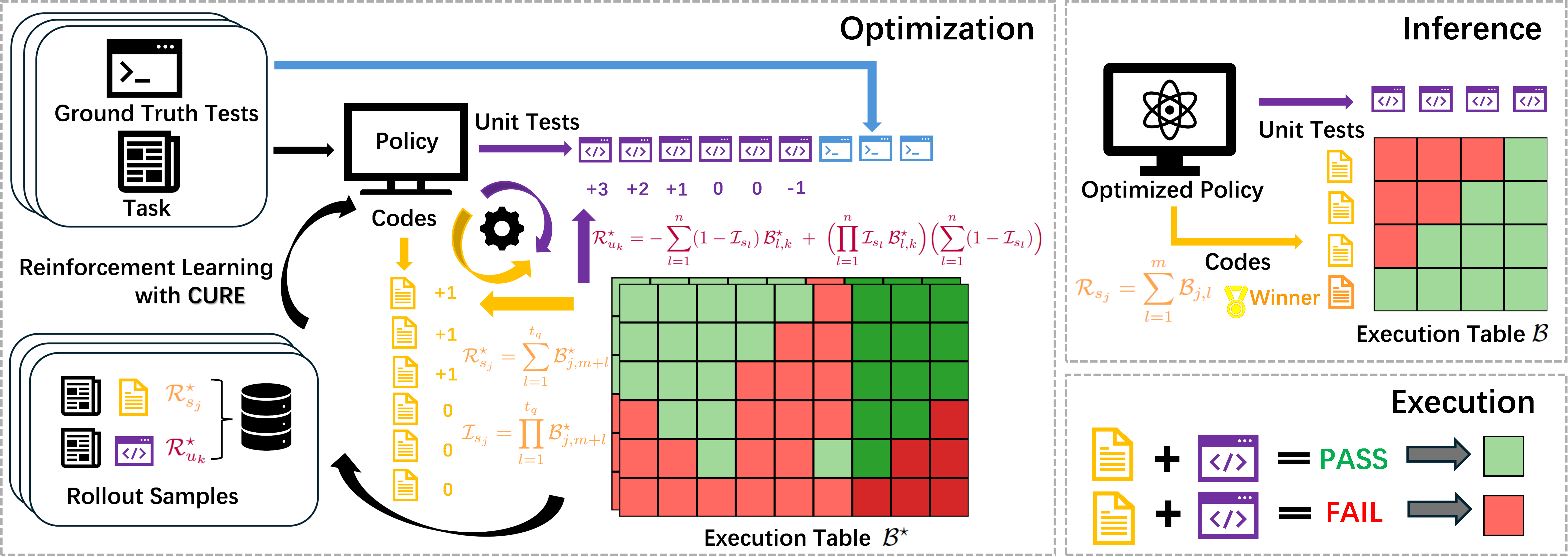}
  \caption{Method Pipeline Overview.
In our RL framework, for each task, we generate a batch of unit tests and code solutions, along with some ground-truth unit tests. Using these, we construct an execution table. From this table, we extract rewards for each unit test (Equation~\ref{R_u_k}) and code response (Equation~\ref{R_s_j}). For the long-CoT model, we apply a transformation on the reward to ensure efficiency. Then we optimize both the unit tester and the coder iteratively.}
  \label{pipeline}
\end{figure}

\section{Method}

In this section, we begin by formulating our final objective and introducing the general concept of \textit{reward precision} (Section~\ref{subsecuttesttime}). We then provide a theoretical analysis of reward precision to derive individual-level rewards for each generated unit test (Section~\ref{subsecanaylsisreward}). Next, we present our novel co-evolving reinforcement learning framework, \textbf{CURE}, in Section~\ref{subsecpipeline}. Finally, we introduce a response-length-guided transformation on the reward, designed to improve the efficiency of the unit test generator for long CoT models (Section~\ref{subsecefficiency}).

\subsection{Motivation: Using Unit Tests for Inference}
\label{subsecuttesttime}

Unlike mathematical tasks, which are computationally intensive and challenging to verify accurately \citep{huang2023large}, code-generation tasks benefit significantly from the use of unit tests for efficient verification. It has been shown \citep{ma2025dynamic} that the accuracy of code generation can be enhanced by adopting the following BoN approach: For each task $q$, the policy LLM generates $n$ candidate solutions $s_j$, where $1 \le j \le n$, and $m$ additional unit tests $u_k$, where $1 \le k \le m$.
Executing the $n$ generated solutions against these $m$ unit tests produces a binary evaluation matrix $\mathcal{B} \in \{0, 1\}^{n \times m}$, where each entry indicates whether a given solution passes a specific test. The reward for solution $s_j$ is defined as follows, and is used to select the optimal coding solution:
\begin{align} \label{inferencereward}
\mathcal{R}_{s_j} = \sum_{l = 1}^{m} \mathcal{B}_{j, l}.
\end{align}
Empirically, this reward is typically valid because incorrectly generated unit tests also rarely favor incorrect solutions. However, this assumption can break down when the generated unit tests are of low accuracy, under ambiguous problem formulations, or in binary output tasks. Therefore, we propose our objective for optimizing the unit test generator, \textbf{reward precision}:
\begin{align}
\label{rewardprecision}
P(\mathcal{R}_{s_{j_1}} > \mathcal{R}_{s_{j_2}} \mid \text{$s_{j_1}$ is correct, $s_{j_2}$ is wrong} ).
\end{align}
The higher the reward precision, the more accurately the generated unit tests can identify and promote correct solutions. But this is merely an overall objective. To obtain rewards at the individual level for generated unit tests, we conduct the following analysis to derive the reward formulation.

\subsection{Analysis on Reward Precision}
\label{subsecanaylsisreward}

In this section, we identify the key factors that ensure the validity and accuracy of the reward precision defined in Equation~\ref{rewardprecision}. Given that the generated responses are i.i.d., we model the binary evaluation results with the following generative process: First, the correctness of a generated solution, denoted by $c_s$, and the correctness of a generated unit test, denoted by $c_u$, are modeled as Bernoulli random variables with success probabilities $p_s$ and $p_u$, respectively. Conditional on their correctness, the execution outcome is another Bernoulli random variable with success probability $p_{c_s c_u}$. Specifically, we have $p_{10} = 0$ and $p_{11} = 1$, while the parameters $p_{00}$ and $p_{01}$ remain unknown.

In the theorem below, we demonstrate increasing the number of generated unit tests $m$ causes the reward precision to converge to $1$, provided that certain conditions involving the parameters $p_u$, $p_{00}$, and $p_{01}$ are satisfied. We naturally derive our optimization objective with this theoretical analysis.

\begin{theorem}
\label{theorem1}
Consider a ground truth unit test $u_k$, a correct solution $s_{j_1}$, and an incorrect solution $s_{j_2}$. The precision based on a single ground truth test is given by
\[P(\mathcal{B}_{j_1, k} > \mathcal{B}_{j_2, k}) = 1 - P(\text{the incorrect solution } s_{j_2} \text{ passes test } u_k).\]
However, when using the aggregated reward defined in Equation~\ref{inferencereward}, we have $P(\mathcal{R}_{s_{j_1}} > \mathcal{R}_{s_{j_2}}) \to 1$ as $m \to \infty$,
if and only if $\mu > 0$, where \[\mu := p_u(1 - p_{01}) - (1 - p_u)p_{00}.\]
Moreover, under this condition, the reward precision satisfies
\[
P(\mathcal{R}_{s_{j_1}} > \mathcal{R}_{s_{j_2}}) \gtrsim 1 - e^{-\mu^2 m / 8}.
\]
\end{theorem}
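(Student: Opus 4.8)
The plan is to reduce the entire statement to the analysis of a single sum of i.i.d. bounded random variables. First I would dispatch the single–ground-truth-test claim, which is essentially definitional: if $u_k$ is ground truth then $c_u = 1$, so under the generative model the correct solution passes with probability $p_{11} = 1$ (hence $\mathcal{B}_{j_1,k} \equiv 1$) while the incorrect solution passes with probability $p_{01}$. Thus $P(\mathcal{B}_{j_1,k} > \mathcal{B}_{j_2,k}) = P(\mathcal{B}_{j_2,k} = 0) = 1 - p_{01} = 1 - P(s_{j_2}\text{ passes }u_k)$, as claimed.

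For the aggregated reward, the central object is the per-test difference $D_l := \mathcal{B}_{j_1,l} - \mathcal{B}_{j_2,l}$, where the $u_l$ are now generated tests, each independently correct with probability $p_u$. Using $p_{11}=1$ and $p_{10}=0$, the correct solution passes test $l$ exactly when that test is correct, so $\mathbb{E}[\mathcal{B}_{j_1,l}] = p_u$; the incorrect solution passes with probability $p_{01}$ on a correct test and $p_{00}$ on an incorrect one, so $\mathbb{E}[\mathcal{B}_{j_2,l}] = p_u p_{01} + (1-p_u)p_{00}$. Subtracting gives exactly $\mathbb{E}[D_l] = p_u(1-p_{01}) - (1-p_u)p_{00} = \mu$, with $D_l \in \{-1,0,1\}$. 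Because the tests are generated i.i.d. and the execution outcomes are conditionally independent across tests, the $D_l$ are i.i.d., and $\mathcal{R}_{s_{j_1}} - \mathcal{R}_{s_{j_2}} = \sum_{l=1}^m D_l$. The ``if and only if'' then follows from the strong law: $\tfrac1m\sum_l D_l \to \mu$ almost surely, so $P(\sum_l D_l > 0)\to 1$ when $\mu>0$ and $\to 0$ when $\mu<0$; and when $\mu=0$ the central limit theorem (using $\mathrm{Var}(D_l)>0$, since $\mathcal{B}_{j_1,l}=c_{u_l}$ is nondegenerate for $p_u\in(0,1)$) forces $P(\sum_l D_l>0)\to \tfrac12$. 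Hence convergence to $1$ holds precisely when $\mu>0$.

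For the explicit rate I would invoke Hoeffding's inequality. Writing $\mu_1=\mathbb{E}[\mathcal{B}_{j_1,l}]$, $\mu_2=\mathbb{E}[\mathcal{B}_{j_2,l}]$ and splitting at the midpoint, the event $\{\mathcal{R}_{s_{j_1}}\le \mathcal{R}_{s_{j_2}}\}$ is contained in $\{\mathcal{R}_{s_{j_1}}\le m\mu_1 - m\mu/2\}\cup\{\mathcal{R}_{s_{j_2}}\ge m\mu_2 + m\mu/2\}$; applying Hoeffding to each sum of bounded terms at deviation $m\mu/2$ produces an exponential tail of order $e^{-\mu^2 m/8}$, the absorbed constant being exactly what the $\gtrsim$ symbol records.

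The main obstacle is not any of the per-test mean computations, which are routine, but the two delicate points flanking them: verifying that the conditional-independence structure of the generative model genuinely makes $\{D_l\}$ i.i.d. (so that both the LLN/CLT dichotomy and Hoeffding legitimately apply), and handling the $\mu=0$ boundary of the equivalence, where one must rule out convergence to $1$ via the CLT together with a brief check of nondegeneracy. Once these are in place, the remainder is a direct calculation.
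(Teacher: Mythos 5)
Your proposal is correct and follows essentially the same route as the paper's proof: reduce to the i.i.d.\ per-test differences $D_l = \mathcal{B}_{j_1,l} - \mathcal{B}_{j_2,l}$ with mean $\mu$, settle the single-test claim and the ``if and only if'' via the SLLN (with a CLT/nondegeneracy check at $\mu = 0$), and obtain the rate from Hoeffding's inequality. The only cosmetic difference is in the tail bound: the paper applies Hoeffding directly to the centered differences $X_k - \mu$ (bounded in absolute value by $2$) with deviation $t = m\mu$, yielding $e^{-\mu^2 m/8}$ exactly, whereas your midpoint split plus union bound on the two separate sums yields $2e^{-\mu^2 m/2}$, which is at least as strong and is absorbed by the $\gtrsim$.
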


From this theorem, we observe that $\mu$ not only guarantees the convergence and validity of the aggregated reward (Equation~\ref{rewardprecision}), but also governs the rate at which it converges to $1$. Specifically, a larger value of $\mu$ implies that fewer unit tests are needed to obtain a reliable reward signal.

Therefore, we use \( \mu \) as the optimization objective for the unit test generator, estimating the individual value of \( \mu \) for each unit test from the execution matrix to serve as its reward. Intuitively, optimizing \( \mu \) corresponds to increasing the accuracy \( p_u \) while controlling the error rates \( p_{01} \) and \( p_{00} \) for the generated unit tests. 
We now introduce our algorithm to co-evolve the coder and the unit tester.

\subsection{Co-evolving Coder and Unit Tester with RL}
\label{subsecpipeline}

For each task $q$ in the training set, which is paired with $t_q$ ground truth unit tests, the policy LLM generates $n$ candidate solutions and $m$ additional unit tests $u_k$, where $1 \le k \le m$.
Similarly, we obtain a binary evaluation matrix $\mathcal{B}^{\star} \in \{0, 1\}^{n \times (m + t_q)}$ by executing the $n$ generated solutions against these $m + t_q$ unit tests. The last $t_q$ columns correspond to the ground truth unit tests. This evaluation matrix serves as the basis for estimating rewards for both the solution generator and the unit test generator, enabling joint optimization via reinforcement learning.

For solution \( s_j \), where \( 1 \le j \le n \), we assign higher rewards to solutions that pass more ground-truth unit tests, reflecting greater correctness and generalizability. The reward is defined as:
\begin{align}
\label{R_s_j}
\mathcal{R}_{s_j}^{\star} = \sum_{l=1}^{t_q} \mathcal{B}_{j,m + l}^{\star}.
\end{align}
For each generated unit test \( u_k \), where \( 1 \le k \le m \), we estimate the reward $\mu = p_u (1 - p_{01}) - (1 - p_u) p_{00}$
from the execution matrix \( \mathcal{B}^{\star} \) by deriving estimators for \( p_u \), \( p_{01} \), and \( p_{00} \). This leads to the following form of the estimated individual-level reward:
\begin{align}
\label{R_u_k}
\mathcal{R}_{u_k}^{\star} = - \sum_{l = 1}^n (1 - \mathcal{I}_{s_l}) \mathcal{B}_{l, k}^{\star} + (\prod_{l = 1}^n \mathcal{I}_{s_l} \mathcal{B}_{l, k}^{\star}) (\sum_{l = 1}^n (1 - \mathcal{I}_{s_l})),
\end{align}
where $\mathcal{I}_{s_j} = \prod_{l = 1}^{t_q} \mathcal{B}_{j, m + l}^{\star}$. The detailed derivation is provided in Appendix~\ref{proofsect}. Intuitively, $\mathcal{R}_{u_k}^{\star}$ is positive and proportional to the number of incorrect solutions that fail test $u_k$ when $u_k$ correctly passes all accurate solutions. Conversely, $\mathcal{R}_{u_k}^{\star}$ is negative and proportional to the number of incorrect solutions that pass test $u_k$ when $u_k$ fails even one correct solution. Here, a correct solution is defined as one passing all ground-truth unit tests, whereas an incorrect solution fails at least one ground-truth test. Therefore, this theoretically derived reward serves as an effective objective, optimizing the accuracy and discriminative power of generated unit tests.
Naively using reward functions like ``whether the unit test passes all correct codes" incentivize the generation of trivial or overly permissive tests that simply maximize pass rates. This undermines the reliability of the reward signal and diminishes the overall effectiveness of the co-evolution process.

After collecting the rollout samples for codes and unit tests and their rewards, we optimize the policy with the following objective iteratively:
\begin{equation}\label{lossfunction}
\begin{split}
\mathcal{J}(\theta, \{o_i\}_{i=1}^{G}) ={}&
\mathbb{E}_{\substack{q\sim P(Q)\\ \{o_i\}\sim\pi_{\theta_\mathrm{old}}(\cdot \mid q)}}
\Bigl[
\frac{1}{G}\sum_{i=1}^{G}
\min\bigl[
\frac{\pi_{\theta}(o_{i}\mid q)}{\pi_{\theta_\mathrm{old}}(o_{i}\mid q)}\,A_{o_i},
\mathrm{clip}\bigl(\frac{\pi_{\theta}(o_{i}\mid q)}{\pi_{\theta_\mathrm{old}}(o_{i}\mid q)},\varepsilon\bigr)A_{o_i}
\bigr]
\Bigr]\\
&-\,
\mathbb{E}_{\substack{q\sim P(Q)\\ \{o_i\}\sim\pi_{\theta_\mathrm{old}}(\cdot \mid q)}}
\Bigl[\,
\beta\,\mathrm{D}_{\mathrm{KL}}\!\bigl[\pi_{\theta}\|\pi_{\mathrm{ref}}\bigr]
\Bigr],
\end{split}
\end{equation}
where $\mathrm{clip}(x, \varepsilon) := \min(\max(x, \,1-\varepsilon),1+\varepsilon)$, $\pi_{\theta}$ is the policy to be optimized, $\pi_{\mathrm{old}}$ is the old policy, $\{o_i\}_{i=1}^{G}$ are the rollout responses, and \( A_{o_i}\) is the normalized reward corresponding to \(\mathcal R_{o_i}^{\star}\). Specifically, we iteratively optimize the policy for coding ability with $\mathcal{J}(\theta, \{s_j\}_{j=1}^{n})$, and unit test generation ability with $\mathcal{J}(\theta, \{u_k\}_{k=1}^{m})$ (see Figure~\ref{pipeline}).

\begin{algorithm}[H]
\caption{\textbf{CURE}}
\label{alg:cure}
\begin{algorithmic}[1]
\STATE \textbf{Input:} 
\STATE \quad 1) A set of coding tasks $D = \{q_1, q_2, \dots, q_N\}.$
\STATE \quad 2) A poliy $\pi_{\theta}$ parameterized by $\theta.$
\STATE \quad 3) Number of iterations $M.$
\STATE \quad 4) Number of code solutions generated in each step: $n.$
\STATE \quad 5) Number of unit tests generated in each step: $m.$
\STATE \quad 5) Learning rate $\eta$, KL coefficient $\beta$.

\STATE \textbf{Initialize:} Policy parameters $\theta$.

\FOR{$t = 1$ to $M$ or not converged}
    \STATE \textbf{Collect rollout samples:}
    \FOR{each task $q \in D$}
        \STATE Generate $n$ code solutions, $s_j$, $1 \le j \le n$, by policy $\pi_{\theta}$.
        \STATE Generate $m$ unit tests, $u_k$, $1 \le k \le m$, by policy $\pi_{\theta}$.
        \STATE Executing the $n$ generated solutions against these $m$ unit tests produces a binary evaluation matrix  $\mathcal{B^{\star}} \in \{0, 1\}^{n \times m}$.
    \ENDFOR
    \STATE \textbf{Obtain the reward for each code solution:}
    \FOR{each task $s_j$, $1\le j \le n$}
        \STATE $
    \mathcal{R}_{s_j}^{\star} = \sum_{l=1}^{t_q} \mathcal{B}_{j,m + l}^{\star}$
    \ENDFOR
    \STATE \textbf{Obtain the reward for each unit test:}
    \FOR{each task $u_k$, $1\le k \le m$}
        \STATE $
    \mathcal{R}_{u_k}^{\star} = - \sum_{l = 1}^n (1 - \mathcal{I}_{s_l}) \mathcal{B}_{l, k}^{\star} + (\prod_{l = 1}^n \mathcal{I}_{s_l} \mathcal{B}_{l, k}^{\star}) (\sum_{l = 1}^n (1 - \mathcal{I}_{s_l}))$
        \IF{$\pi_{\theta}$ is long-cot model}
            \STATE $\mathcal{R}_{u_k}^{\star} = trans(\mathcal{R}_{u_k}^{\star}, l_k)$ (see Section~\ref{subsecefficiency})
        \ENDIF
    \ENDFOR
    \STATE \textbf{Optimize the policy $\pi_{\theta}$:}
    \STATE Compute advantages $A_{s_j} = normalize(\mathcal{R}_{s_j}^{\star})$.
    \STATE Fine-tune $\pi_{\theta}$ to obtain updated parameters \( \theta \leftarrow \theta - \eta \nabla_{\theta} \mathcal{J}(\theta, \{s_j\}_{j=1}^{n}) \) (see Equation~\ref{lossfunction}).
    \STATE Compute advantages $A_{u_k} = normalize(\mathcal{R}_{u_k}^{\star})$.
    \STATE Fine-tune $\pi_{\theta}$ to obtain updated parameters \( \theta \leftarrow \theta - \eta \nabla_{\theta} \mathcal{J}(\theta, \{u_k\}_{k=1}^{m}) \).
\ENDFOR

\STATE \textbf{Output:} Trained generator $\pi_{\theta}$.
\end{algorithmic}
\end{algorithm}

\subsection{Improve Efficiency of Long-CoT Unit Tester}
\label{subsecefficiency}

In addition to experiments conducted on base LLMs, we also perform experiments using the long-CoT model, which currently exemplifies the highest reasoning capabilities of LLMs. However, it is well-documented that these long-CoT models suffer from significantly increased inference times \citep{yeo2025demystifying, wang2025dynamic, guo2025deepseek}. To enhance efficiency, we propose a general response-length-aware transformation applied to the rewards of unit tests specifically when utilizing long-CoT models.

Formally, for each task $q$, consider a set of standardized rewards $\{r_i\}_{i=1}^{m}$ (standardized by subtracting the mean) and the corresponding response lengths $\{l_i\}_{i=1}^{m}$. Our goal is to assign negative values to overly long responses proportionally to their lengths while ensuring that the transformed rewards maintain a clear separation such that negative original rewards remain negative and positive original rewards remain positive.
Specifically, we first transform the rewards to $\widehat{r}_i$ by \[
\widehat{r}_i =
\begin{cases}
-l_i + T_l & \text{if } r_i > 0, \\
-l_{\max} + T_l & \text{if } r_i \le 0,
\end{cases}
\]
where $T_l = \operatorname{median}\{ l_j \mid r_j > 0\}$, $l_{max} = \max\{ l_j \mid r_j > 0 \}$.
Subsequently, we balance the transformed rewards between positive and negative responses and normalize them, yielding the final transformed reward $r_i^{\star}$, defined by $r_i^{\star} = \alpha \widehat{r}_i/\sigma$ if $\widehat{r}_i > 0$, or $r_i^{\star} = \widehat{r}_i/\sigma$ if $\widehat{r}_i \le 0$, where $\alpha = \sum_{j : \widehat{r}_j < 0} (-\widehat{r}_j) / (\sum_{j : \widehat{r}_j > 0} \widehat{r}_j)$,
and $\sigma$ is the standard deviation calculated over the set $\{\alpha \widehat{r}_i \mid \widehat{r}_i > 0\} \cup \{\widehat{r}_i \mid \widehat{r}_i \le 0\}$. Finally, after the rewards are assigned, we truncate responses longer than 8K tokens, retaining only the first 8K tokens during training. In this way, we aim to preserve the original reward information to some extent, while penalizing overly long responses.

\section{Experiments}

\subsection{Experimental Settings}
\label{settings}

\subsubsection{Datasets}
We select five widely used coding datasets for our comprehensive evaluation: LiveBench \citep{white2024livebench}, MBPP \citep{austin2021program}, LiveCodeBench \citep{jain2024livecodebench}, CodeContests \citep{li2022alphacode}, and CodeForces \citep{penedo2025codeforces}.
Specifically, for CodeContests, we extract tasks with difficulty level $\le 2$, and randomly split them into a training set of 4.5k examples and an evaluation set of 200 examples.
For LiveCodeBench, we utilize version 2, which contains 511 problems.
For MBPP, we use its standard test set for evaluation.
The CodeForces data used in our experiments has no overlap with CodeContests \citep{penedo2025codeforces}; we randomly sample 500 examples from it for evaluation.

\begin{table*}[t!]
\small
\renewcommand\tabcolsep{2.8pt}
\renewcommand\arraystretch{1.15}
\centering
\caption{Performance of ReasonFlux-Coder models and baseline models across five benchmarks. Each entry reports the average accuracy (\%) of generated unit tests (UT), the average one-shot code generation accuracy (Code), and the Best-of-N (BoN) accuracy, using 16 generated code solutions and 16 generated unit tests. “Long” refers to the long-CoT base models. The Coder models here are also instruction-finetuned models. The full name of DeepSeek-Coder-V2-16B is DeepSeek-Coder-V2-Lite-Instruct. Numbers in bold indicate the best performance, and those with an underline indicate the second-best.}
\label{tab:qwen_main}
\resizebox{\textwidth}{!}{%
\begin{tabular}{l|ccc|ccc|ccc|ccc|ccc}
\specialrule{1.2pt}{0pt}{0pt}
\multirow{2}{*}{\textbf{Model}} &
  \multicolumn{3}{c|}{\textbf{LiveBench}} &
  \multicolumn{3}{c|}{\textbf{MBPP}} &
  \multicolumn{3}{c|}{\textbf{LiveCodeBench}} &
  \multicolumn{3}{c|}{\textbf{CodeContests}} &
  \multicolumn{3}{c}{\textbf{CodeForces}} \\
& \textbf{UT} & \textbf{Code} & \textbf{BoN} &
  \textbf{UT} & \textbf{Code} & \textbf{BoN} &
  \textbf{UT} & \textbf{Code} & \textbf{BoN} &
  \textbf{UT} & \textbf{Code} & \textbf{BoN} &
  \textbf{UT} & \textbf{Code} & \textbf{BoN} \\
\hline\hline
DeepSeek-Coder-V2-16B & 35.4 & 31.9 & 37.2 & 68.7 & 65.2 & 78.8 & 30.0 & 26.8 & 32.3 & 30.4 & 20.3 & 24.7 & 20.5 & 5.0 & 9.1 \\
Qwen2.5-14B-Coder-Instruct & \underline{39.0} & \underline{42.2} & \underline{53.1} & \underline{75.1} & 72.6 & \underline{84.9} & \underline{41.6} & \underline{38.2} & \underline{47.7} & 37.3 & 23.3 & 32.0 & \underline{22.1} & \underline{7.8} & \underline{13.5} \\
Qwen2.5-14B-Instruct & 27.8 & 36.4 & 51.7 & 72.8 & \underline{76.3} & 83.2 & 35.7 & 33.5 & 45.1 & \underline{43.8} & \underline{25.6} & \underline{33.4} & 20.7 & 7.3 & 12.5 \\
\rowcolor[gray]{.9}
\textbf{ReasonFlux-Coder-14B}       & \textbf{73.3} & \textbf{47.5} & \textbf{60.2} & \textbf{91.6} & \textbf{78.5} & \textbf{88.2} & \textbf{81.4} & \textbf{40.5} & \textbf{50.5} & \textbf{86.0} & \textbf{32.1} & \textbf{44.4} & \textbf{82.3} & \textbf{12.1} & \textbf{25.9} \\
\addlinespace[3pt]
Seed-Coder-8B-Instruct        & \underline{31.4} & \underline{35.6} & \underline{45.3} & \underline{60.0} & 64.7 & \underline{83.3} & \underline{28.7} & \textbf{31.7} & \underline{39.5} & \underline{29.1} & 18.9 & \underline{27.6} & 12.1 & \underline{7.1} & 8.1 \\
Qwen2.5-7B-Coder-Instruct  & 19.3 & 35.0 & 42.9 & 41.3 & \underline{68.0} & 79.6 & 20.6 & 29.8 & 34.8 & 12.9 & \underline{22.8} & 23.8 & 7.2 & 6.7 & \underline{9.1} \\
Qwen2.5-7B-Instruct  & 26.5 & 31.1 & 35.9 & 35.8 & 66.3 & 79.4 & 28.6 & 26.9 & 32.6 & 26.7 & 21.2 & 25.8 & \underline{18.9} & 5.4 & 8.9 \\
\rowcolor[gray]{.9}
\textbf{ReasonFlux-Coder-7B}        & \textbf{54.8} & \textbf{37.1} & \textbf{51.6} & \textbf{79.4} & \textbf{70.2} & \textbf{84.6} & \textbf{57.7} & \underline{31.2} & \textbf{42.7} & \textbf{62.6} & \textbf{25.9} & \textbf{34.1} & \textbf{45.6} & \textbf{8.2} & \textbf{16.1} \\
\addlinespace[3pt]
Qwen3-4B  (Long)  & 36.8 & 72.5 & 78.1 & 76.5 & 88.4 & 90.1 & 50.9 & 74.5 & 80.0 & 43.6 & 53.0 & 58.3 & 54.1 & 28.8 & 38.5 \\
\rowcolor[gray]{.9}
\textbf{ReasonFlux-Coder-4B}    & \textbf{84.6} & \textbf{74.6} & \textbf{82.0} & \textbf{83.3} & \textbf{89.5} & \textbf{91.1} & \textbf{86.8} & \textbf{74.9} & \textbf{80.6} & \textbf{72.2} & \textbf{54.6} & \textbf{59.9} & \textbf{65.8} & \textbf{30.9} & \textbf{40.2} \\
\specialrule{1.2pt}{0pt}{0pt}
\end{tabular}
}
\end{table*}

\subsubsection{Models and Optimization}
We use Qwen2.5-7B and 14B instrut models \citep{yang2024qwen2} as our standard base models, and select Qwen3-4B as the base model for the long-CoT variant.
At each sampling step during reinforcement learning, we generate 16 rollouts for unit tests and 16 for code using vLLM \citep{kwon2023efficient}, with a temperature of 1.0, top-$p$ of 1.0.
For optimization, we set the learning rate to $1 \times 10^{-6}$ and the KL coefficient $\beta$ to 0.01. We derive the 7B and 14B models by training for 350 steps, and the 4B model by 50 steps.
Specifically, for the long-CoT model, we use a lower temperature of 0.8 and apply a response-length-guided transformation to the unit test reward to improve post-training inference efficiency. We train these models using 8 A100 GPUs.

\subsubsection{Test-time Scaling and Agentic Coding}
Best-of-N (BoN) is the most straightforward and widely used test-time scaling and agentic coding method \citep{ma2025dynamic, chen2022codet}, and serves as a primary metric for evaluating coding performance in our setting. Specifically, the policy generates $n$ candidate code solutions and $m$ unit tests, then selects the best solution based on the reward defined in Equation~\ref{inferencereward}.
We also evaluate our approach under several other agentic coding and test-time scaling pipelines \citep{huang2023enhancing, ridnik2024code, li2025s}.
In particular, MPSC \citep{huang2023enhancing} generates multiple code solutions, unit tests, and specifications per task, and selects the best solution by computing a consistency score.
AlphaCodium \citep{ridnik2024code} generates comprehensive unit tests to critique the generated solutions and iteratively refine the code accordingly.
S* \citep{li2025s} organically combines iterative debugging using public unit tests and generates unit tests for pairwise discrimination, in order to select the most promising solution. See details in Appendix~\ref{testtimescale}.

\begin{table*}[t]
\small
\renewcommand\tabcolsep{2.8pt}
\renewcommand\arraystretch{1.15}
\centering
\caption{Application to GPT-series models. We apply ReasonFlux-Coder-4B as a unit tester to scale GPT models serving as coders, achieving improved performance while maintaining cost efficiency. The two entries report the average API cost (Cost, in units of $10^{-3}$ USD) per task and the overall accuracy (Acc) for each benchmark.}
\label{tab:gpt4o_main}
\resizebox{0.75\textwidth}{!}{%
\begin{tabular}{l|cc|cc|cc|cc|cc}
\specialrule{1.2pt}{0pt}{0pt}
\multirow{2}{*}{\textbf{Model}} &
\multicolumn{2}{c|}{\textbf{LB}} &
\multicolumn{2}{c|}{\textbf{MBPP}} &
\multicolumn{2}{c|}{\textbf{LCB}} &
\multicolumn{2}{c|}{\textbf{CC}} &
\multicolumn{2}{c}{\textbf{CF}} \\
&  \textbf{Cost} & \textbf{Acc} &
   \textbf{Cost} & \textbf{Acc} &
  \textbf{Cost} & \textbf{Acc} &
  \textbf{Cost} & \textbf{Acc} &
  \textbf{Cost} & \textbf{Acc} \\
\hline\hline
4o  (one-shot)             &    4.8 & 48.4
 &    2.7 & 85.0
 &   5.8 & 48.7
 &   5.5 & 41.0
 &   7.1 & 11.1
 \\
 \addlinespace[2pt]
 4o-mini (one-shot)          & 0.3 & 46.3 &    0.2 & 80.1 &   0.4 & 44.3 &   0.3 & 38.8 &   0.4 & 12.0 \\
4o-mini  (BoN-16)        & 10.8 & 55.4 &    6.7 & 81.5 &   12.0 & 50.7 &   10.1 & 40.6 &   13.1 & 13.5 \\
\rowcolor[gray]{.9}
4o-mini-\textbf{CURE}(BoN-16) &   4.7 & \textbf{58.6} &   2.7 & \textbf{86.1} &  5.6 & \textbf{56.8} &   5.3 & \textbf{46.4} &   6.9 & \textbf{21.2} \\
\addlinespace[3pt]
4.1-mini  (one-shot)        & 0.6 & 65.4 &  0.3 & 88.4 & 0.6 & 68.1 & 1.0 &  51.3 &  1.5 & 22.8 \\
4.1-mini   (BoN-16)      &  32.5 & 69.5 &  14.7 & 88.2 &  31.9 & 73.4 &  42.2 & 56.9 &  59.5 & 34.1 \\
\rowcolor[gray]{.9}
4.1-mini-\textbf{CURE} (BoN-16)&   9.3 & \textbf{74.2} &  4.6 & \textbf{89.6} &  9.6 & \textbf{74.1} &  15.5 & \textbf{58.1} & 24.4 & \textbf{35.1} \\
\specialrule{1.2pt}{0pt}{0pt}
\end{tabular}
}
\end{table*}

\subsubsection{Agentic Unit Test Generation}
We also evaluate our model's utility in an agentic unit test generation pipeline. Following prior work \citep{yuan2023no, chen2024chatunitest}, we first generate unit tests and then iteratively refine them based on their execution results on the corresponding code. See details in Appendix~\ref{agenticut}.

\subsection{Results}
\label{results}

\subsubsection{CURE Significantly Improves the Overall Coding Ability}
Specifically, we apply our optimization to derive the ReasonFlux-Coder-7B and ReasonFlux-Coder-14B models from the base Qwen2.5-7B-Instruct and Qwen2.5-14B-Instruct models. Figure~\ref{utexample} (b–d) show the co-evolution process for unit test accuracy, code accuracy, and estimated reward, demonstrating a stable and promising co-evolving pattern. The resulting ReasonFlux-Coder models surpass their respective base models on average by 37.8\% in unit test accuracy, 5.3\% in one-shot code generation accuracy, and 9.0\% in Best-of-N (BoN) accuracy (using 16 code solutions and 16 unit tests) (Table~\ref{tab:qwen_main}). Note that we have minimized the formatting error \footnote{Formatting errors refer to model outputs that do not conform to the expected extraction format, resulting in failure to extract the final answer.}
 of the base model by designing appropriate prompts (Appendix~\ref{prompt}), making the formatting error rate of unit tests $9\%$ and $1\%$ for Qwen2.5-7B and 14B Instruct models respectively, and the formatting error rate of one-shot coding as $0.08\%$ and $0.05\%$, which are all significantly smaller than the improvement we gained with CURE.
Notably, our models also consistently outperform the corresponding coding-supervised fine-tuned (SFT) models—Qwen2.5-Coder-Instruct—across all three metrics.
Moreover, our results show that the optimization leads to consistent and robust improvements across various BoN settings (Figure~\ref{Qwenbeafbon}). This indicates that the ReasonFlux-Coder models not only enhance the overall performance ceiling (when large amounts of code and unit test samples are generated), but also improve self-check efficiency in low-sample regimes (e.g., when sampling only 1 or 2 candidates).

\subsubsection{Robust for Long-CoT Models and Achieves Inference Efficiency}
We also evaluate CURE’s optimization on the Long-CoT model, Qwen3-4B, incorporating our response-length-guided reward transformation. The resulting ReasonFlux-Coder-4B model consistently outperforms Qwen3-4B in unit test accuracy, code accuracy, and BoN accuracy (Table~\ref{tab:qwen_main}). Notably, the average response length for unit test generation is reduced to 64.8\% of its original length (Figure~\ref{utexample} (e-f)), significantly improving inference-time efficiency. We also observe that the accuracy gains for standard base models are more substantial than for long-CoT models, which aligns with the demonstrated findings \citep{yue2025does1}. Long-CoT models have already captured much of the benefit from scaling through CoT reasoning and gain less from BoN compared to standard models.

\begin{figure}[t!]
  \centering
  \includegraphics[width=1.0\textwidth]
  {./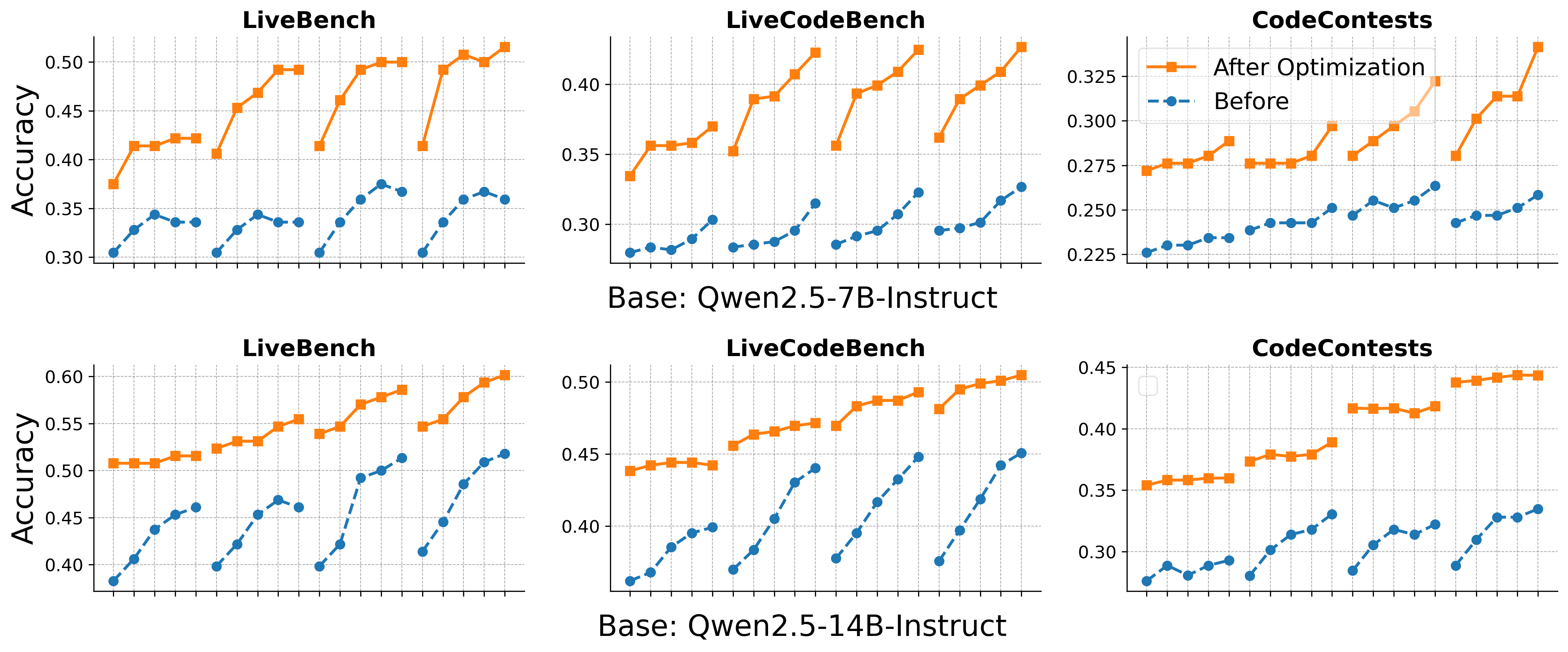}
  \caption{The BoN performance improvement after optimization on base model. Four curves (left to right) show sampling 2, 4, 8, and 16 generated codes; each curve’s five points represent 1, 2, 4, 8, and 16 generated unit tests. }
  \label{Qwenbeafbon}
\end{figure}

\begin{figure}[t!]
  \centering
  \includegraphics[width=1.0\textwidth]
  {./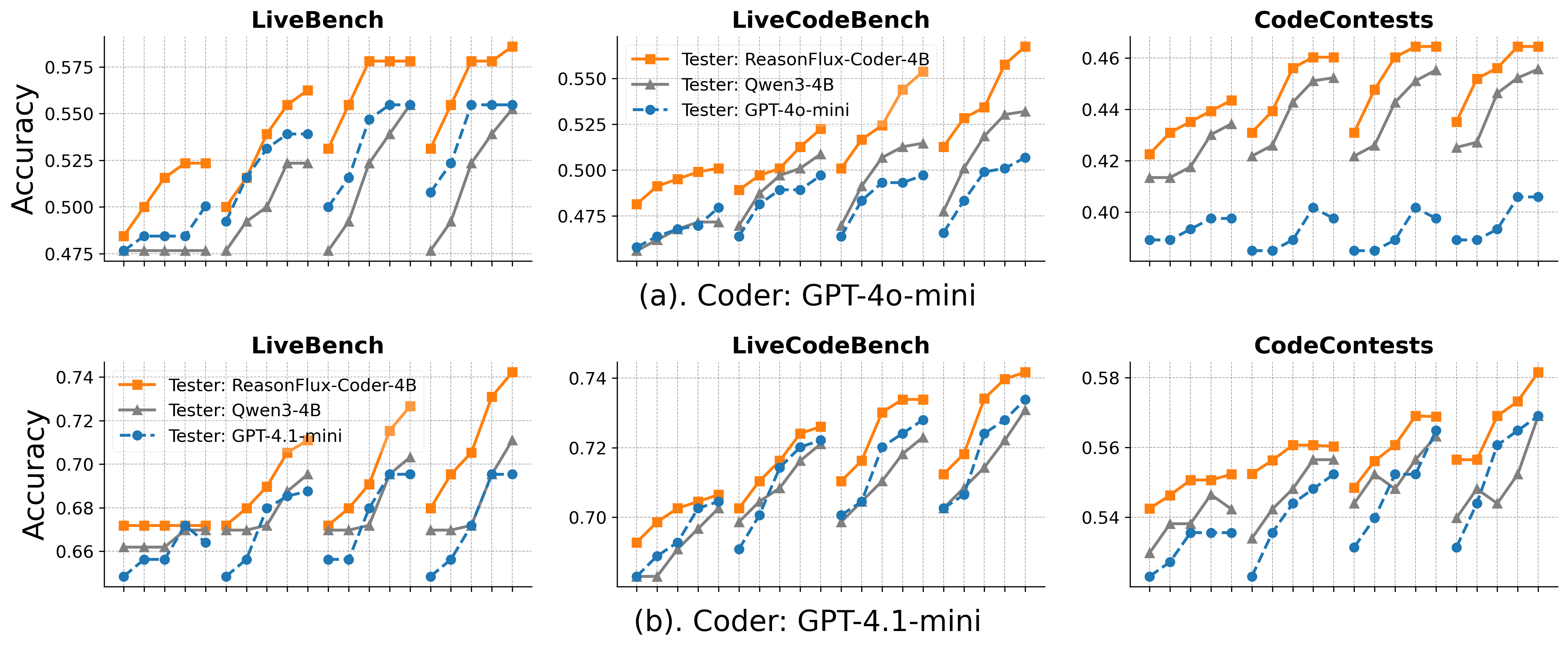}
  \caption{The BoN performance improvement across benchmarks when using ReasonFlux-Coder-4B as unit tester. Four curves (left to right) show sampling 2, 4, 8, and 16 generated codes; each curve’s five points represent 1, 2, 4, 8, and 16 generated unit tests.}
  \label{gptapp_with_qwen3}
\end{figure}

\subsubsection{Boosts API-inference Models’ Performance and Cost-efficiency}
We apply ReasonFlux-Coder-4B as the unit tester and evaluate its effect when paired with GPT-series models as coders, to disentangle the effects of the long-CoT coders’ strong coding ability from the unit test generation ability. We find that ReasonFlux-Coder-4B improves the BoN accuracy of GPT-4o-mini and GPT-4.1-mini by an average of 5.5\% and 1.8\%, respectively (Table~\ref{tab:gpt4o_main}). Notably, using GPT-4o-mini as the coder and ReasonFlux-Coder-4B as the unit tester yields a 7.0\% improvement over GPT-4o one-shot performance, while also reducing cost. This demonstrates our model's strong potential for reducing the cost of API-based pipelines. In contrast, scaling GPT-4o-mini alone results in only a 1.5\% gain while incurring nearly twice the API cost compared to using ReasonFlux-Coder-4B. As shown in Figure~\ref{gptapp_with_qwen3}, ReasonFlux-Coder-4B consistently outperforms both Qwen3-4B, GPT-4o-mini and GPT-4.1-mini as a unit tester across different BoN settings. These results demonstrate the effectiveness of using unit tests generated by the our model.

\subsubsection{Facilitates Label-free RL}
We have already demonstrated the utility of unit tests generated by the ReasonFlux-Coder model for solution selection. But can the model also serve as a reward model to guide reinforcement learning? We apply ReasonFlux-Coder-4B to generate unit tests as supervision for reinforcement learning training on the Qwen2.5-14B-Instruct model. Surprisingly, the resulting performance improvements are comparable to those achieved using ground-truth labeled supervision, across all three metrics: code generation accuracy, unit test accuracy, and BoN accuracy (Figure~\ref{nosup}). This demonstrates that ReasonFlux-Coder can serve as an effective reward model not only for inference-time enhancement but also for guiding optimization during training.

\subsubsection{Broad Application to Test-time Scaling and Agentic Coding Methods}
In addition to the standard test-time scaling method BoN \citep{ma2025dynamic, chen2022codet}, we also evaluate ReasonFlux-Coder-14B on several other test-time scaling and agentic methods—MPSC \citep{huang2023enhancing}, AlphaCodium \citep{ridnik2024code}, and S* \citep{li2025s}—achieving an average improvement of 8.1\% over the base model Qwen2.5-14B-Instruct (Figure~\ref{multiagentic}(a)).
Beyond code and unit test generation, these pipelines involve iterative refinement and debugging based on execution results, which require comprehensive coding and self-correction capabilities—capabilities our model successfully demonstrates.
We further evaluate on agentic unit test generation tasks, which focus on refining unit tests based on execution results from code, and observe an average improvement of 25.1\% in unit test accuracy over the base model (Figure~\ref{multiagentic}(c)).

\begin{figure}[t!]
  \centering
  \includegraphics[width=1.0\textwidth]{./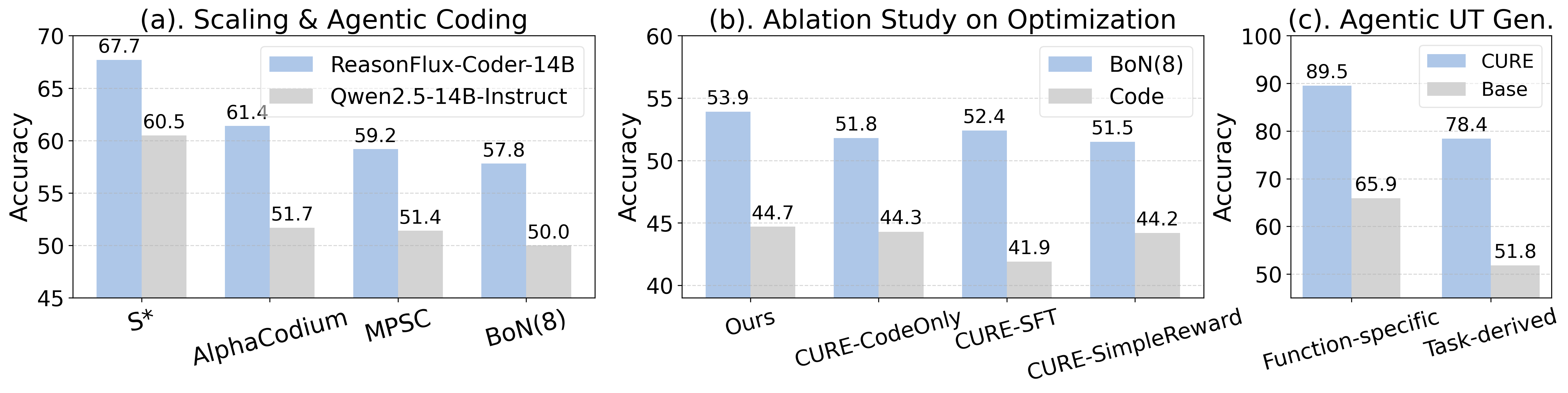}
  \caption{(a). Application of our model to various test-time scaling and agentic coding methods. We set the number of generated samples to eight in the BoN setting here.
(b). Ablation study on optimization strategies and reward design choices, using Qwen2.5-14B-Instruct as the base model. All training runs are conducted with 100 optimization steps.
(c). Application of our model to different agentic unit test generation tasks. ``Function-specific" refers to tasks where the input includes both the problem description and the ground-truth code, whereas ``Task-derived" refers to tasks where the input consists solely of the problem description. (a–c) are all evaluated on LiveBench, with Qwen2.5-14B-Instruct used as the base model.}
  \label{multiagentic}
\end{figure}

\begin{figure}[t]
  \centering
  \includegraphics[width=1.0\textwidth]{./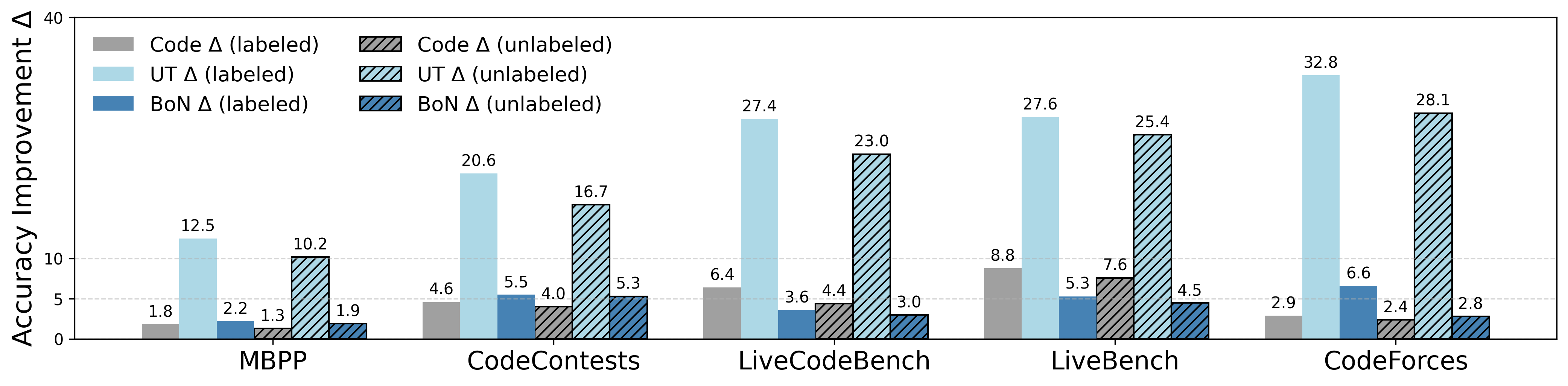}
  \caption{Accuracy improvement of Qwen2.5-14B-Instruct when trained with reinforcement learning using labeled unit tests as rewards versus using ReasonFlux-Coder-4B generated unit tests as rewards. Both models are trained for 150 steps. The BoN setting involves generating 16 samples for both code and unit tests.}
  \label{nosup}
\end{figure}

\subsubsection{Ablation Study on Optimization Methods and Reward Designs}
We conduct ablation studies on two aspects of the optimization process. First, we conduct experiments optimizing only the coder and using supervised fine-tuning (selecting the samples with positive rewards to fine-tune) instead of reinforcement learning. Second, we evaluate a simplified reward design for the unit test: assigning a reward of 1 if all correct codes pass, and 0 otherwise, which is an estimate of $p_u$.
We find that CURE consistently outperforms these alternatives and remains the optimal choice across all ablations (Figure~\ref{multiagentic}(b)). Optimizing only for code generation does not improve the model’s ability to produce accurate unit tests and therefore falls short in self-check-based inference scaling (e.g., BoN). Supervised fine-tuning focuses solely on positive examples, ignoring informative negative samples.
Moreover, using a simple reward during optimization leads to poor control over key error probabilities: $p_{01}$ and $p_{00}$ reach 42.2\% and 14.7\%, respectively. In contrast, our theoretically derived reward better constrains these values to 36.5\% and 9.1\%, improving the precision of selection and the overall effectiveness of solution ranking.

\section{Discussions}

In this paper, we propose CURE, a novel optimization framework combined with a theoretically derived reward for the unit tester, that co-evolves models' coding and unit test generation capabilities without requiring any ground-truth code for supervision, which greatly enhances flexibility and scalability. Through extensive evaluations on five benchmarks, our results demonstrate that ReasonFlux-Coder models achieve significant performance improvements in both code generation and unit test generation tasks. Our long-CoT model ReasonFlux-Coder-4B consistently outperforms Qwen-4B while achieving significantly higher efficiency in unit test generation. Moreover, ReasonFlux-Coders proves effective in broader applications, including test-time scaling and agentic coding (8.1\% improvement), agentic unit test generation (25.1\% improvement), and as a reward model for reinforcement learning.
We also outline future directions. Given the surprising result of using the well-trained model as reward supervision for reinforcement fine-tuning, a promising direction is to scale CURE optimization via self-supervision without any labeled data.

\clearpage

\bibliographystyle{unsrt}
\bibliography{main}

\clearpage

\beginappendix

\section{Proofs of Theoretical Results}
\label{proofsect}

\begin{proof}(of Theorem~\ref{theorem1})

\textbf{Set–up and intuition.}
For every test index $k\,(1\!\le\!k\!\le\!m)$ define  
\[
   X_k \;:=\; \underbrace{\mathcal{B}_{j_1k}}_{\text{outcome on correct }s_{j_1}}
            \;-\;
            \underbrace{\mathcal{B}_{j_2k}}_{\text{outcome on wrong }s_{j_2}}
   \;\;\in\;\;\{-1,0,1\}.
\]
Positive $X_k$ means the correct solution beats the wrong one on test $k$,
$X_k=0$ means they tie, and $X_k=-1$ means the wrong solution wins.
The reward difference after $m$ tests is  
\(
   D_m := \sum_{k=1}^m X_k
        = \mathcal{R}_{s_{j_1}}-\mathcal{R}_{s_{j_2}}.
\)
Our target event $\{\mathcal{R}_{s_{j_1}}>\mathcal{R}_{s_{j_2}}\}$ coincides with
$\{D_m>0\}$, so we analyse the sign of $D_m$.

\vspace{4pt}
\textbf{Single ground-truth test.}
Assume a particular test $u_k$ is correct
(i.e.\ $c_{u_k}=1$).
Because a correct solution always passes a correct test
($p_{11}=1$) we have $\mathcal{B}_{j_1k}=1$ with probability~1.
Conversely, an incorrect solution passes that same correct test with
probability $p_{01}$, so
\[
   P\bigl[\mathcal{B}_{j_2k}=0\bigr] = 1-p_{01}.
\]
Hence
\[
   P\!\bigl[X_k=1\bigr]
      = P\bigl[\mathcal{B}_{j_1k}=1,\;\mathcal{B}_{j_2k}=0\bigr]
      = 1-p_{01},
\quad
   P\!\bigl[X_k\le0\bigr] = p_{01}.
\]
Therefore
\(P\!\bigl(X_k>0\bigr)=1-p_{01},\)
which proves the first statement.

\vspace{4pt}
\textbf{Distribution of $X_k$.}
Let $I_k:=\mathbf 1\{c_{u_k}=1\}$ indicate whether the $k$-th test is correct.
By the data-generation assumption,
\[
   P(I_k=1)=p_u,\qquad P(I_k=0)=1-p_u.
\]
Case $I_k=1$: we are in the setting of Step 1, so
\[
   P(X_k=1\,|\,I_k=1)=1-p_{01},\quad
   P(X_k=-1\,|\,I_k=1)=0,\quad
   P(X_k=0\,|\,I_k=1)=p_{01}.
\]
Case $I_k=0$: the test itself is wrong.
Now a correct solution fails with probability $1$ ($p_{10}=0$),
while the incorrect solution can pass spuriously with probability $p_{00}$.
Thus
\[
   P(X_k=1\,|\,I_k=0)=0,\;
   P(X_k=-1\,|\,I_k=0)=p_{00},\;
   P(X_k=0\,|\,I_k=0)=1-p_{00}.
\]
Applying the law of total probability yields the unconditional mass
\[
\begin{aligned}
   P(X_k=1) &= p_u(1-p_{01}) +(1-p_u)\cdot 0
            = p_u(1-p_{01}), \\[4pt]
   P(X_k=-1)&= (1-p_u)p_{00}, \\[4pt]
   P(X_k=0) &= 1-P(X_k=\pm1).
\end{aligned}
\]

Denote
\[
   \mu := E[X_k] = 1\cdot P(X_k=1) + (-1)\cdot P(X_k=-1)
                 = p_u(1-p_{01}) - (1-p_u)p_{00},
\]
\[
   \sigma_k^2 := \mathrm{Var}(X_k)
               = E[X_k^2]-\mu^{2}
               = P(X_k=1)+P(X_k=-1)-\mu^{2}.
\]
All $X_k$’s are i.i.d. because the unit tests are generated independently
and the solutions themselves are fixed.

\vspace{4pt}
\textbf{Convergence Analysis.}
Write the empirical mean
\(
   \overline{X}_m := \tfrac1m\sum_{k=1}^{m}X_k.
\)
Since $E[X_k]=\mu$ and $E[|X_k|]\le1$,
the strong law of large numbers (SLLN) tells us
\[
     \overline{X}_m \xrightarrow{\text{a.s.}}\; \mu
     \quad (m\to\infty).
\]
But $D_m/m=\overline{X}_m$, hence
\[
   \frac{D_m}{m}\xrightarrow{\text{a.s.}}\;\mu.
\]
Consequences.
\begin{itemize}
\item If $\mu>0$, then $\tfrac{D_m}{m}$ is eventually positive almost surely, so
      \(P(D_m>0)\to1.\)
\item If $\mu<0$, $\tfrac{D_m}{m}$ is eventually negative a.s., so
      \(P(D_m>0)\to0.\)
\item If $\mu=0$, $\tfrac{D_m}{\sqrt{m}}$ has variance
      \(\sigma_k^2\) and remains $O_p(1)$,
      whence \(P(D_m>0)\to\tfrac12\)
      by symmetry of the CLT limit distribution.
\end{itemize}

\vspace{4pt}
\textbf{Explicit tail bound for finite $m$, assuming $\mu>0$.}

Recall $X_k\in\{-1,0,1\}$ and $E[X_k]=\mu>0$.
Define the centred variables
\[
   Z_k := X_k-\mu \quad (1\le k\le m),
\]
so that $E[Z_k]=0$.
Because $-1\le X_k\le 1$, we have
$ -1-\mu\;\le\; Z_k \;\le\; 1-\mu.$
Since $\mu\in(0,1)$, we have
\[
   |Z_k|\;\le\; 2 \quad\text{almost surely.}
\]

Now we apply Hoeffding’s additive inequality.
Let $Z_1,\ldots,Z_m$ be independent, centred random variables satisfying
$|Z_k|\le c$ a.s.\ for every $k$.
For any $t>0$,
\[
   P\!\Bigl(\sum_{k=1}^{m}Z_k \le -t\Bigr)
      \;\le\;
      \exp\!\Bigl(-\tfrac{t^{2}}{2mc^{2}}\Bigr).
      \tag{Hoeffding}
\]
Here \(c=2\), 
By definition
\[
   D_m = \sum_{k=1}^{m} X_k
        = \sum_{k=1}^{m} (Z_k+\mu)
        = m\mu + \sum_{k=1}^{m} Z_k.
\]
Hence
\[
   \{D_m\le 0\}
   \;=\;
   \Bigl\{\sum_{k=1}^{m} Z_k \le -m\mu\Bigr\}.
\]
Substituting \(t=m\mu\) and \(c=2\) into (Hoeffding) gives
\[
    P(D_m\le 0)
      = P\!\Bigl(\sum_{k=1}^{m} Z_k \le -m\mu\Bigr)
      \;\le\;
      \exp\!\Bigl(-\frac{(m\mu)^{2}}{8m}\Bigr)
      = \exp\!\Bigl(-\frac{\mu^{2}m}{8}\Bigr).
\]

Finally,
\[
   P(D_m>0) = 1-P(D_m\le 0)
             \;\ge\;
             1-\exp\!\bigl(-\tfrac{\mu^{2}m}{8}\bigr),
\]
yielding the advertised exponential guarantee.

\end{proof}

\begin{proposition}
\label{proposition1}
Given the execution table, the individual reward for unit test $u_k$ can be estimated by 
$$
\mathcal{R}_{u_k}^{\star} = - \sum_{l = 1}^n (1 - \mathcal{I}_{s_l}) \mathcal{B}_{l, k}^{\star} + (\prod_{l = 1}^n \mathcal{I}_{s_l} \mathcal{B}_{l, k}^{\star}) (\sum_{l = 1}^n (1 - \mathcal{I}_{s_l})).
$$
\end{proposition}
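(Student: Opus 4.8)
The plan is to obtain $\mathcal{R}_{u_k}^{\star}$ as an empirical plug-in estimate of the objective $\mu = p_u(1-p_{01}) - (1-p_u)p_{00}$ from Theorem~\ref{theorem1}, specialized to the single test $u_k$ and evaluated against the fixed pool of $n$ generated solutions, and then to verify the closed form by a short algebraic reduction. I would start from the decomposition that already underlies $\mu$: conditioning on the correctness of the test, a \emph{correct} test contributes $+(1-p_{01})$ (the chance an incorrect solution fails it, since a correct solution passes it with probability $1$), whereas an \emph{incorrect} test contributes $-p_{00}$ (the chance an incorrect solution passes it, while a correct solution fails it since $p_{10}=0$); these are mixed with weights $p_u$ and $1-p_u$. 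This two-regime reading of $\mu$ is exactly what the reward should imitate at the level of one realized test.

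Next I would read off empirical quantities from the execution matrix $\mathcal{B}^{\star}$. Writing $N_0 := \sum_{l=1}^n (1-\mathcal{I}_{s_l})$ for the number of incorrect solutions (those with $\mathcal{I}_{s_l}=0$), $P := \sum_{l=1}^n (1-\mathcal{I}_{s_l})\mathcal{B}_{l,k}^{\star}$ for the incorrect solutions that pass $u_k$, and $F := N_0 - P$ for those that fail it, I introduce the indicator $\mathcal{I}_{u_k}$ that $u_k$ passes every correct solution (the empirical analogue of the test being correct; this is what the product term $\prod_{l=1}^n \mathcal{I}_{s_l}\mathcal{B}_{l,k}^{\star}$ is meant to encode, under the convention that incorrect solutions contribute a unit factor). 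The plug-in reward then estimates $1-p_{01}$ by $F/N_0$, estimates $p_{00}$ by $P/N_0$, and uses $\mathcal{I}_{u_k}$ as the realized surrogate for the mixture weight $p_u$, yielding the case-split form $\mathcal{R}_{u_k}^{\star} = \mathcal{I}_{u_k}\,F - (1-\mathcal{I}_{u_k})\,P$ after clearing the harmless common factor $N_0$ (harmless because the algorithm converts rewards into normalized advantages, so any fixed positive rescaling is immaterial).

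The final step is a one-line algebraic collapse of the case split: expanding $\mathcal{I}_{u_k}(N_0 - P) - (1-\mathcal{I}_{u_k})P$ cancels the two $\mathcal{I}_{u_k}P$ terms and leaves $\mathcal{I}_{u_k}N_0 - P$. Substituting $P = \sum_{l=1}^n (1-\mathcal{I}_{s_l})\mathcal{B}_{l,k}^{\star}$ and $N_0 = \sum_{l=1}^n (1-\mathcal{I}_{s_l})$, and identifying $\mathcal{I}_{u_k}$ with the product term, reproduces the stated formula verbatim. This also recovers the intuition in the main text: when $\mathcal{I}_{u_k}=1$ the reward equals $+F$ (incorrect solutions correctly rejected), and when $\mathcal{I}_{u_k}=0$ it equals $-P$ (incorrect solutions wrongly accepted by a test that already misrejects a correct solution). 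As an optional sanity check tying the estimator back to $\mu$, I would compute its expectation over a freshly sampled test with the solution pool fixed: $E[F]=N_0(1-p_{01})$ in the correct regime, $E[P]=N_0 p_{00}$ in the incorrect regime, and—provided at least one correct solution is present, so that $\mathcal{I}_{u_k}$ coincides with true test correctness—this gives $E[\mathcal{R}_{u_k}^{\star}] = N_0\,\mu$, i.e. an unbiased estimate of $\mu$ up to the constant $N_0$.

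I expect the main obstacle to be conceptual rather than computational: pinning down the right empirical surrogates for $p_u$, $p_{01}$, and $p_{00}$ at the granularity of a single test, in particular justifying that ``$u_k$ is correct'' should be operationalized as ``$u_k$ passes all empirically-correct solutions'' and reconciling this indicator with the product notation in the statement. I would also flag the degenerate case $N_0=0$ (no incorrect solutions in the pool), where $F=P=0$ and the reward vanishes, so the test receives no discriminative signal, as well as the case with no correct solutions, where $\mathcal{I}_{u_k}$ may fail to track true correctness and the expectation identification must be stated conditionally.
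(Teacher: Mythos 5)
Your proposal is correct and follows essentially the same route as the paper's proof: plug-in estimators $\widehat{p}_u$ (the product indicator that $u_k$ passes all empirically-correct solutions), $1-\widehat{p}_{01} = F/N_0$, and $\widehat{p}_{00} = P/N_0$ substituted into $\mu$, the same algebraic collapse to $\mathcal{I}_{u_k} N_0 - P$, and the same justification for discarding the common factor $N_0$ (constant in $k$, hence immaterial after advantage normalization). Your additional remarks—the unit-factor convention needed to read the product term, the unbiasedness check $E[\mathcal{R}_{u_k}^{\star}] = N_0\,\mu$, and the degenerate cases with no incorrect or no correct solutions—go slightly beyond what the paper records but do not alter the argument.
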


\begin{proof}(of Proposition~\ref{proposition1})
We use the following estimation to detect if a code solution $s_j$ is correct or not:
$$
\mathcal{I}_{s_j} = \prod_{l = 1}^{t_q} \mathcal{B}_{j, m + l}^{\star}.
$$So the accuracy of $u_k$, $\widehat{p}_u$, can be estimated by
$$
\prod_{l = 1}^n \mathcal{I}_{s_l} \mathcal{B}_{l, k}^{\star}.
$$Similarly, we can obtain estimator $1 - \widehat{p}_{01}$ and $\widehat{p}_{00}$ as
$$
\sum_{l = 1}^n (1 - \mathcal{I}_{s_l}) ( 1 - \mathcal{B}_{l, k}^{\star}) / \sum_{l = 1}^n( 1 - \mathcal{I}_{s_l}), \quad \sum_{l = 1}^n (1 - \mathcal{I}_{s_l}) \mathcal{B}_{l, k}^{\star} / \sum_{l = 1}^n( 1 - \mathcal{I}_{s_l}),
$$
respectively. Finally, we derive $\widehat{\mu} = \widehat{p_u}(1 - \widehat{p}_{01}) - (1 - \widehat{p}_u)\widehat{p}_{00}$:
\begin{align*}
&\left[(\prod_{l = 1}^n \mathcal{I}_{s_l} \mathcal{B}_{l, k}^{\star}) (\sum_{l = 1}^n (1 - \mathcal{I}_{s_l}) (1 -\mathcal{B}_{l, k}^{\star})) - (1 - \prod_{l = 1}^n \mathcal{I}_{s_l} \mathcal{B}_{l, k}^{\star})(\sum_{l = 1}^n (1 - \mathcal{I}_{s_l}) \mathcal{B}_{l, k}^{\star})\right]/\sum_{l = 1}^n( 1 - \mathcal{I}_{s_l})\\
=&\left[- \sum_{l = 1}^n (1 - \mathcal{I}_{s_l}) \mathcal{B}_{l, k}^{\star} + (\prod_{l = 1}^n \mathcal{I}_{s_l} \mathcal{B}_{l, k}^{\star}) (\sum_{l = 1}^n (1 - \mathcal{I}_{s_l}))\right]/\sum_{l = 1}^n( 1 - \mathcal{I}_{s_l}).
\end{align*}
Given that $\sum_{l = 1}^n( 1 - \mathcal{I}_{s_l})$ is constant for different $k$, we have our final reward for $u_k$:
$$
- \sum_{l = 1}^n (1 - \mathcal{I}_{s_l}) \mathcal{B}_{l, k}^{\star} + (\prod_{l = 1}^n \mathcal{I}_{s_l} \mathcal{B}_{l, k}^{\star}) (\sum_{l = 1}^n (1 - \mathcal{I}_{s_l})).
$$
\end{proof}

\newpage

\section{Additional Experimental Results}
\label{additionalexp}

\begin{table*}[htbp]
\centering
\small
\renewcommand\tabcolsep{3pt}
\renewcommand\arraystretch{1.4}
\caption{This is the error analysis table corresponding to Table~\ref{tab:qwen_main}. Each cell reports the “accuracy improvement over the base model (standard error).” Note that the accuracies of unit test and code are evaluated over 16 independent runs, whereas BoN scaling is computationally intensive, so we report BoN accuracy based on a single run per benchmark.}
\begin{tabular}{l|cc|cc|cc|cc|cc}
\toprule
\textbf{Model} & \multicolumn{2}{c|}{\textbf{LiveBench}} & \multicolumn{2}{c|}{\textbf{MBPP}} & \multicolumn{2}{c|}{\textbf{LiveCodeBench}} & \multicolumn{2}{c|}{\textbf{CodeContests}} & \multicolumn{2}{c}{\textbf{CodeForces}} \\
& UT & Code & UT & Code & UT & Code & UT & Code & UT & Code \\
\midrule
ReasonFlux-Coder-14B & \makecell{0.455\\(0.008)} & \makecell{0.111\\(0.0041)} & \makecell{0.188\\(0.010)} & \makecell{0.022\\(0.0025)} & \makecell{0.457\\(0.012)} & \makecell{0.070\\(0.0029)} & \makecell{0.422\\(0.016)} & \makecell{0.065\\(0.0030)} & \makecell{0.616\\(0.038)} & \makecell{0.048\\(0.0011)} \\
\midrule
ReasonFlux-Coder-7B  & \makecell{0.283\\(0.007)} & \makecell{0.060\\(0.0035)} & \makecell{0.436\\(0.035)} & \makecell{0.039\\(0.0021)} & \makecell{0.311\\(0.011)} & \makecell{0.043\\(0.0030)} & \makecell{0.359\\(0.016)} & \makecell{0.047\\(0.0036)} & \makecell{0.267\\(0.027)} & \makecell{0.028\\(0.0014)} \\
\midrule
ReasonFlux-Coder-4B  & \makecell{0.478\\(0.009)} & \makecell{0.021\\(0.0034)} & \makecell{0.068\\(0.010)} & \makecell{0.011\\(0.0021)} & \makecell{0.359\\(0.021)} & \makecell{0.014\\(0.0025)} & \makecell{0.286\\(0.004)} & \makecell{0.023\\(0.0027)} & \makecell{0.117\\(0.024)} & \makecell{0.025\\(0.0014)} \\
\bottomrule
\end{tabular}
\label{tab:acc_diff_se_split}
\end{table*}

\begin{table}[H]
\centering
\small
\caption{Response length (in tokens) of Qwen3-4B and ReasonFlux-Coder-4B in unit test generation task, corresponding to Figure~\ref{utexample} (e).}
\begin{tabular}{lcc}
\toprule
\textbf{Benchmark} & \textbf{Qwen3-4B} & \textbf{ReasonFlux-Coder-4B} \\
\midrule
LiveBench      & 4711 & 3067 \\
MBPP           & 2419 & 1611 \\
LiveCodeBench  & 4326 & 2837 \\
CodeContests   & 6086 & 3899 \\
CodeForces     & 7309 & 4706 \\
\bottomrule
\end{tabular}
\label{tab:response_length_qwen}
\end{table}

\clearpage

\section{Details of Experiments}
\label{detailsofexp}

\subsection{Prompt Design}
\label{prompt}

This is the prompt for code generation:

\begin{tcolorbox}[colback=blue!5!white, colframe=gray!50!black, title=Task 1, breakable]
\ttfamily
PROMPT = """
<|im\_start|>You are a helpful assistant help user solve problems. <|im\_end|>
<|im\_start|>User: You need to think first then write python script. Use input() to input and print() to output.
This is the problem:
\{\{problem\}\} <|im\_end|>
<|im\_start|>Assistant: 
"""
\end{tcolorbox}

This is the prompt for unit test generation:

\begin{tcolorbox}[colback=blue!5!white, colframe=gray!50!black, title=Unit test prompt, breakable]
\ttfamily
PROMPT = """
<|im\_start|>You are a helpful assistant help user generate test examples for coding tasks. <|im\_end|>
<|im\_start|>User: Given a coding task, instead of providing the final script, your task is to generate a new test example (both input, output and explanation).
This is the problem:
\{\{problem\}\}
You need to provide a new test example. A good test example should be completely accurate and conform to the problem’s format requirements, while also possessing enough discriminative power to distinguish correct code from incorrect code.
Before providing a test example, you must think carefully and reason step by step to derive an input and output you are very confident are correct. For example, start by designing an input you can reliably handle, then compute the output step by step. If you're unsure about the output, revise or re-design the input to ensure accuracy. 
Finally, after completing these previous thinking and derivation steps, you MUST put your final test example in the following format:

**Test Input:**
```input here```

**Test Output:**
```output here```

**Explanation:**
explanation here. <|im\_end|>
<|im\_start|>Assistant: 
"""
\end{tcolorbox}

\subsection{Preprocess Data}

In our experiments, we adopt the stdio format for inputs and outputs, which is the standard input/output format used in LiveBench \citep{white2024livebench}, LiveCodeBench \citep{jain2024livecodebench}, CodeContests \citep{li2022alphacode}, and CodeForces \citep{penedo2025codeforces}.
However, some tasks in LiveBench and LiveCodeBench, as well as all tasks in MBPP \citep{austin2021program}, originally use a functional input/output format. For consistency and ease of evaluation, we convert these functional formats to stdio.
Specifically, the conversion rule is as follows: each variable is placed on a separate line, and lists are flattened into space-separated values on a single line, as illustrated in the following example:
\begin{tcolorbox}[colback=blue!5!white, colframe=gray, 
    title=Input and output format example, fonttitle=\bfseries\footnotesize, 
    sharp corners,  
    parbox=false, breakable]
\ttfamily
\# functional format:

assert work("a", [1, 2, 3]) == 2

\# stdio format:

Input:

a

1 2 3

Output:

2
\end{tcolorbox}

For evaluation, we directly use the ground-truth code provided in CodeContests and MBPP. For Codeforces, LiveCode, and LiveCodeBench, we collect code generated by QwQ-32B \citep{qwq32b} (using BoN with a maximum of 3 samples) that passes all ground-truth tests to serve as the ground-truth code.

\subsection{Test-time Scaling and Agentic Coding}
\label{testtimescale}

We introduce how we apply MPSC \citep{huang2023enhancing},
AlphaCodium \citep{ridnik2024code} and S* \citep{li2025s} in our test-time scaling and agentic coding applications.

\subsubsection{MPSC}
For each task, we generate 8 samples of code, unit tests, and specifications (A specification is a pair of functions—a pre-condition and a post-condition—that define the valid input space and the expected input-output behavior of a program, serving as a formal description of its intended functionality.). We then follow the iterative optimization algorithm to derive the consistency scores, which will be used to identify the optimal code solution.

\subsubsection{AlphaCodium}
Following their procedure, we generate 8 code solutions per task using reasoning over public tests, along with 8 corresponding unit tests. Each code solution undergoes 2 iterations of refinement based on execution results from the public tests, followed by another 2 iterations based on execution results using the generated unit tests. Specifically, the refinement step asks the model to check the unit tests, code, and execution results, and then decide whether to refine or not.

\subsubsection{S*}
We generate 8 code solutions and apply 4 iterations of self-debugging using public tests to obtain 8 refined versions. Note that the debugging is based on the execution results of ground-truth unit tests, so we directly ask the model to modify the code if the execution fails. The final solution is selected via their pairwise comparison method, using generated unit tests for evaluation.

\subsection{Agentic Unit Test Generation Methods}
\label{agenticut}

\paragraph{We first introduce the development of unit test generation methods.}
Traditional approaches rely on software analysis techniques such as search-based methods (Evosuite) \citep{fraser2011evosuite}, random testing (Randoop) \citep{pacheco2007randoop}, model checking \citep{enoiu2016automated, gargantini1999using}, and symbolic execution \citep{pǎsǎreanu2008combining, execution2005symstra}. To improve scalability, neural machine translation-based methods were introduced \citep{tufano2020unit, alagarsamy2024a3test}. Specifically, AthenaTest \citep{tufano2020unit} employs a BART model, while A3Test \citep{alagarsamy2024a3test} uses a PLBART model with post-processing for improved accuracy.
With the recent advancements in LLMs, prompt-based agentic methods such as ChatTester \citep{yuan2023no}, ChatUniTest \citep{chen2024chatunitest}, and TestART \citep{gu2024testart} have demonstrated superior performance, further highlighting the potential of training LLMs for unit test generation. In this paper, we adopt the iterative refinement and generation pipeline used in ChatTester and ChatUniTest.

\paragraph{Detailed Approach for Agentic Unit Test Generation in Our Experiments}
For the function-specific unit test generation task, where the input includes both the task description and ground-truth code, we prompt the policy model to generate both the code and the corresponding unit test. We then execute them to obtain the test result. If the test passes, the process proceeds to the next iteration; if it fails, we provide the code, unit test, and execution feedback to the policy model, which decides whether to refine the unit test. The maximum number of iterations is set to 3.

\subsection{Generated Unit Test Examples}

\subsubsection{Examples for ReasonFlux-Coder-14B}

\begin{tcolorbox}[colback=blue!5!white, colframe=gray!50!black, title=Task 1, breakable]
You are given an array of integers \texttt{nums} of length \texttt{n}.

The \textbf{cost} of an array is the value of its first element. For example, the cost of \texttt{[1,2,3]} is \texttt{1}, while the cost of \texttt{[3,4,1]} is \texttt{3}.

You need to divide \texttt{nums} into 3 disjoint contiguous subarrays.

Return the \textbf{minimum possible sum} of the cost of these subarrays.

\textbf{Example 1:}

\textbf{Input:}
\begin{verbatim}
1 2 3 12
\end{verbatim}

\textbf{Output:}
\begin{verbatim}
6
\end{verbatim}

\textbf{Explanation:} The best possible way to form 3 subarrays is: \texttt{[1]}, \texttt{[2]}, and \texttt{[3,12]} at a total cost of \texttt{1 + 2 + 3 = 6}.

Other possible ways include:
\begin{itemize}
  \item \texttt{[1], [2,3], [12]} with a cost of \texttt{1 + 2 + 12 = 15}
  \item \texttt{[1,2], [3], [12]} with a cost of \texttt{1 + 3 + 12 = 16}
\end{itemize}
\end{tcolorbox}

\begin{tcolorbox}[colback=blue!5!white, colframe=gray!50!black, title=Generated unit test for task 1, breakable]
1. Start with a simple input and compute the output step by step.

2. Let's take the input as \texttt{[2, 3, 4, 1]}.

3. According to the rules:
   \begin{itemize}
     \item The array needs to be divided into three disjoint contiguous subarrays.
     \item The cost is defined as the first element of each subarray.
   \end{itemize}

Let's think through the possible ways to divide the array \texttt{[2, 3, 4, 1]} into three subarrays:

\textbf{Option 1:} \texttt{[2], [3], [4, 1]} \\
Cost: \texttt{2 + 3 + 4 = 9}

\textbf{Option 2:} \texttt{[2, 3], [4], [1]} \\
Cost: \texttt{2 + 4 + 1 = 7}

\textbf{Option 3:} \texttt{[2], [3, 4], [1]} \\
Cost: \texttt{2 + 3 + 1 = 6}

After comparing all options, the minimum cost is from \texttt{[2], [3, 4], [1]} yielding a cost of \texttt{6}.

\textbf{Test Input:}
\begin{verbatim}
2 3 4 1
\end{verbatim}

\textbf{Test Output:}
\begin{verbatim}
6
\end{verbatim}

\textbf{Explanation:} The optimal way to divide the array \texttt{[2, 3, 4, 1]} is:
\begin{itemize}
  \item \texttt{[2]} with a cost of \texttt{2},
  \item \texttt{[3, 4]} with a cost of \texttt{3},
  \item \texttt{[1]} with a cost of \texttt{1}.
\end{itemize}

Total cost: \texttt{2 + 3 + 1 = 6}.
\end{tcolorbox}


\begin{tcolorbox}[colback=blue!5!white, colframe=gray!50!black, title=Task 2, breakable]
You are given a 0-indexed integer array \texttt{batteryPercentages} having length \texttt{n}, denoting the battery percentages of \texttt{n} 0-indexed devices.

Your task is to test each device \texttt{i} in order from \texttt{0} to \texttt{n - 1}, by performing the following test operations:

\begin{itemize}
  \item If \texttt{batteryPercentages[i]} is greater than \texttt{0}:
  \begin{itemize}
    \item Increment the count of tested devices.
    \item Decrease the battery percentage of all devices with indices \texttt{j} in the range \texttt{[i + 1, n - 1]} by \texttt{1}, ensuring their battery never goes below \texttt{0}, i.e., \texttt{batteryPercentages[j] = max(0, batteryPercentages[j] - 1)}.
    \item Move to the next device.
  \end{itemize}
  \item Otherwise, move to the next device without performing any test.
\end{itemize}

Return an integer denoting the number of devices that will be tested after performing the test operations in order.

\textbf{Example 1:}

\textbf{Input:}
\begin{verbatim}
1 1 2 1 3
\end{verbatim}

\textbf{Output:}
\begin{verbatim}
3
\end{verbatim}

\textbf{Explanation:}
\begin{itemize}
  \item At device 0: batteryPercentages[0] > 0, so count = 1. batteryPercentages becomes [1,0,1,0,2]
  \item At device 1: batteryPercentages[1] == 0, skip.
  \item At device 2: batteryPercentages[2] > 0, count = 2. batteryPercentages becomes [1,0,1,0,1]
  \item At device 3: batteryPercentages[3] == 0, skip.
  \item At device 4: batteryPercentages[4] > 0, count = 3.
\end{itemize}

So the answer is \texttt{3}.
\end{tcolorbox}

\begin{tcolorbox}[colback=blue!5!white, colframe=gray!50!black, title=Generated unit test for task 2, breakable]
1. Start with a case where devices alternate between having battery and not having battery to ensure we understand the decrement logic correctly.

2. Let's consider the input \texttt{[1, 0, 2, 0, 3]}.

\textbf{Analysis:}
\begin{itemize}
  \item At device 0: batteryPercentages[0] = 1 > 0
    \begin{itemize}
      \item Increment tested count: 1
      \item Decrement indices 1 to 4 → batteryPercentages becomes [1,0,1,0,2]
    \end{itemize}
  \item At device 1: batteryPercentages[1] = 0 → skip
  \item At device 2: batteryPercentages[2] = 1 > 0
    \begin{itemize}
      \item Increment tested count: 2
      \item Decrement indices 3 to 4 → batteryPercentages becomes [1,0,1,0,1]
    \end{itemize}
  \item At device 3: batteryPercentages[3] = 0 → skip
  \item At device 4: batteryPercentages[4] = 1 > 0 → increment tested count: 3
\end{itemize}

\textbf{Test Input:}
\begin{verbatim}
1 0 2 0 3
\end{verbatim}

\textbf{Test Output:}
\begin{verbatim}
3
\end{verbatim}

\textbf{Explanation:}
\begin{itemize}
  \item Step-by-step execution results in final batteryPercentages: [1,0,1,0,1]
  \item Devices tested: 0, 2, and 4 → total = 3
\end{itemize}
\end{tcolorbox}


\begin{tcolorbox}[colback=blue!5!white, colframe=gray!50!black, title=Task 3, breakable]
You are given an array \texttt{nums} consisting of positive integers.

Return the total frequencies of elements in \texttt{nums} such that those elements all have the maximum frequency.

The frequency of an element is the number of occurrences of that element in the array.

\textbf{Example 1:}

\textbf{Input:}
\begin{verbatim}
1 2 2 3 1 4
\end{verbatim}

\textbf{Output:}
\begin{verbatim}
4
\end{verbatim}

\textbf{Explanation:} The elements \texttt{1} and \texttt{2} have a frequency of \texttt{2}, which is the maximum frequency in the array. 

So the number of elements in the array with maximum frequency is \texttt{2 (from 1) + 2 (from 2) = 4}.
\end{tcolorbox}

\begin{tcolorbox}[colback=blue!5!white, colframe=gray!50!black, title=Generated unit test for task 3, breakable]
Let's consider an input where we have multiple elements with different frequencies. For instance, an array with several \texttt{2}s and \texttt{3}s, and a few \texttt{1}s and \texttt{4}s.

\textbf{Test Input:}
\begin{verbatim}
1 2 2 3 3 3 4 4 4 4
\end{verbatim}

\textbf{Step-by-Step Computation:}
\begin{itemize}
  \item Count the frequency of each element:
  \begin{itemize}
    \item \texttt{1}: 1 time
    \item \texttt{2}: 2 times
    \item \texttt{3}: 3 times
    \item \texttt{4}: 4 times
  \end{itemize}
  \item Maximum frequency = \texttt{4} (from element \texttt{4})
  \item Only element \texttt{4} has this maximum frequency.
\end{itemize}

\textbf{Test Output:}
\begin{verbatim}
4
\end{verbatim}

\textbf{Explanation:} The element \texttt{4} appears \texttt{4} times, and this is the highest frequency in the array. Therefore, the total number of elements with the maximum frequency is \texttt{4}.
\end{tcolorbox}


\begin{tcolorbox}[colback=blue!5!white, colframe=gray!50!black, title=Task 4, breakable]
You are given an array of integers \texttt{nums} of length \texttt{n}.

The \textbf{cost} of an array is the value of its first element. For example, the cost of \texttt{[1,2,3]} is \texttt{1}, while the cost of \texttt{[3,4,1]} is \texttt{3}.

You need to divide \texttt{nums} into 3 disjoint contiguous subarrays.

Return the \textbf{minimum possible sum} of the cost of these subarrays.

\textbf{Example 1:}

\textbf{Input:}
\begin{verbatim}
1 2 3 12
\end{verbatim}

\textbf{Output:}
\begin{verbatim}
6
\end{verbatim}

\textbf{Explanation:}
\begin{itemize}
  \item The best way to partition is \texttt{[1]}, \texttt{[2]}, \texttt{[3,12]} with a total cost: \texttt{1 + 2 + 3 = 6}.
  \item Other possibilities:
  \begin{itemize}
    \item \texttt{[1], [2,3], [12]} → cost = \texttt{1 + 2 + 12 = 15}
    \item \texttt{[1,2], [3], [12]} → cost = \texttt{1 + 3 + 12 = 16}
  \end{itemize}
\end{itemize}
\end{tcolorbox}

\begin{tcolorbox}[colback=blue!5!white, colframe=gray!50!black, title=Generated unit test for task 4, breakable]
\textbf{1. Designing the Input:}
\begin{itemize}
  \item Use an array of length 5 to explore multiple partitioning scenarios.
  \item Mix small and large integers for diversity.
\end{itemize}

Consider the array \texttt{[4, 1, 5, 2, 6]}.

\textbf{2. Computing the Output:} Try all valid 3-way partitions:
\begin{itemize}
  \item \texttt{[4], [1], [5, 2, 6]} → cost = \texttt{4 + 1 + 5 = 10}
  \item \texttt{[4], [1, 5], [2, 6]} → cost = \texttt{4 + 1 + 2 = 7}
  \item \texttt{[4], [1, 5, 2], [6]} → cost = \texttt{4 + 1 + 6 = 11}
  \item \texttt{[4, 1], [5], [2, 6]} → cost = \texttt{4 + 5 + 2 = 11}
  \item \texttt{[4, 1], [5, 2], [6]} → cost = \texttt{4 + 5 + 6 = 15}
  \item \texttt{[4, 1, 5], [2], [6]} → cost = \texttt{4 + 2 + 6 = 12}
\end{itemize}

Minimum cost = \texttt{7} from partition \texttt{[4], [1, 5], [2, 6]}.

\textbf{Test Input:}
\begin{verbatim}
4 1 5 2 6
\end{verbatim}

\textbf{Test Output:}
\begin{verbatim}
7
\end{verbatim}

\textbf{Explanation:} The optimal partition is:
\begin{itemize}
  \item \texttt{[4]} → cost = 4
  \item \texttt{[1, 5]} → cost = 1
  \item \texttt{[2, 6]} → cost = 2
\end{itemize}
Total cost: \texttt{4 + 1 + 2 = 7}.
\end{tcolorbox}


\subsubsection{Examples for ReasonFlux-Coder-4B (long-CoT)}

\begin{tcolorbox}[colback=blue!5!white, colframe=gray!50!black, title=Task 5, breakable]
You are given an array of integers \texttt{nums} of length \texttt{n}.

The \textbf{cost} of an array is the value of its first element. For example, the cost of \texttt{[1,2,3]} is \texttt{1} while the cost of \texttt{[3,4,1]} is \texttt{3}.

You need to divide \texttt{nums} into 3 disjoint contiguous subarrays.

Return the \textbf{minimum possible sum} of the cost of these subarrays.

\textbf{Example 1:}

\textbf{Input:}
\begin{verbatim}
1 2 3 12
\end{verbatim}

\textbf{Output:}
\begin{verbatim}
6
\end{verbatim}

\textbf{Explanation:} The best possible way to form 3 subarrays is: \texttt{[1]}, \texttt{[2]}, and \texttt{[3,12]} at a total cost of \texttt{1 + 2 + 3 = 6}. The other possible ways to form 3 subarrays are:
\begin{itemize}
  \item \texttt{[1], [2,3], [12]} → cost = \texttt{1 + 2 + 12 = 15}
  \item \texttt{[1,2], [3], [12]} → cost = \texttt{1 + 3 + 12 = 16}
\end{itemize}
\end{tcolorbox}

\begin{tcolorbox}[colback=blue!5!white, colframe=gray!50!black, title=Generated unit test for task 5, breakable]
\texttt{<think>}\\
Okay, let's see. I need to create a new test case for this problem. The problem is about selecting up to L subjects without overlapping, maximizing happiness. So I need to make sure the input follows the constraints and that the output is correct.

First, I need to make sure the input is valid. Let's start by choosing small values for N, M, L that are within the constraints. Let's say N is 3, M is 3, L is 2. That's manageable.

Now, I need to create courses that don't overlap and can be selected without conflict. Let me think of some courses that are non-overlapping. For example:

Course 1: di=0 (Monday), ai=1, ki=1 (so runs during 1st period). That's 1 period. Then course 2 could be di=0, ai=2, ki=1. Then course 3 could be di=1, ai=1, ki=1. Then they can be selected as they don't overlap. But since L is 2, I need to select 2 courses that give maximum happiness.

Let me think of their ti values. Let's say course 1 has ti=5, course 2 has ti=4, course 3 has ti=3. Then the max would be 5+4=9. But maybe there's a better selection.

Alternatively, if there's a course that overlaps with others. Wait, but in this case, the courses are not overlapping. But maybe in the test case, it's possible to select two courses that don't conflict, but another combination gives higher happiness.

Alternatively, maybe there's a scenario where overlapping courses can't be selected. So I need to design courses that allow different combinations.

Another approach: Let me think of two courses that can be chosen. For instance, course 1 (d=0, a=1, k=1) and course 2 (d=1, a=1, k=1). But course 1 is on Monday, a=1, so runs from Monday period 1, and course 2 is Tuesday period 1. So they are not overlapping. They can be both selected if L is 2. So total happiness is sum of their ti.

But maybe another course has higher ti. Let me try to make a test case where the best selection is two courses that don't overlap, but perhaps have higher ti.

Let's design a scenario. For example, N=3 (days 0-4?), wait N is the number of periods per day. Wait the problem says that classes are from Monday to Friday (5 days) and for each day, there are N periods (from 1 to N), and courses start at ai and run ki frames (consecutive). So for example, if ai is 1 and ki is 2, the course runs from 1 to 2 (periods 1 and 2) of that day.

So for the test case, I need to make sure that courses are not overlapping. For example, a course that starts on Monday (d=0) at a=1, ki=1 (so runs Monday period 1), and a course that starts on Tuesday (d=1) at a=2, ki=2 (so runs Tuesday period 2 and 3) — they don't overlap.

But to make this work, let's create a few courses that can be selected and have different ti values.

Let me try to create a test case:

N=3 (so each day has 3 periods)

M=3, L=2.

Course1: d=0 (Monday), a=1, k=1, t=5. It runs from Monday period 1.

Course2: d=0 (Monday), a=2, k=1, t=6. It runs from Monday period 2.

Course3: d=1 (Tuesday), a=1, k=1, t=4. It runs on Tuesday period 1.

Now, if L is 2, the best selection is course1 and course3 (t=5+4=9) or course2 and course3 (6+4=10). But course1 and course2 overlap on Monday, so they cannot be selected together. So the maximum is course2 and course3, which sum to 6+4=10.

But wait, if course2 and course3 are selected, then course2 is on Monday period 2, course3 is on Tuesday period 1. They don't conflict, so that's allowed.

So in that case, the output would be 10.

But maybe that's the test case. However, I need to make sure that the input is correct.

Let me structure the input as:

N=3, M=3, L=2

Then three courses:

d1=0, a1=1, k1=1, t1=5

d2=0, a2=2, k2=1, t2=6

d3=1, a3=1, k3=1, t3=4

So the input is:

3 3 2

0 1 1 5

0 2 1 6

1 1 1 4

The output would be 6+4=10?

Yes, that would be the maximum. Because course2 and course3 can be taken.

But what if another combination? For example, course2 (t=6) and course3 (t=4) is 10.

Alternatively, course2 and course3 is allowed. Then that's the correct answer.

Now, the test input would be:

Test Input:

3 3 2

0 1 1 5

0 2 1 6

1 1 1 4

Then the output is 10. So that's a valid test case.

But I need to check whether the courses don't overlap. Course1 is Monday 1, course2 is Monday 2. So they don't conflict. However, course2 and course1 are on the same day, but different periods, so they can be taken together if L is larger than 1. But in this case, L is 2. But wait, the problem says that each student can take up to L subjects that do not overlap.

But in the above example, course1 and course2 are on the same day, same day, but different periods, but since the course starts from ai and runs for ki frames, they are consecutive. So course1 is Monday, period 1. It's for 1 frame. So the course is from Monday period 1. Course2 is Monday period 2, so it runs for 1 frame. So they are on the same day but not overlapping. So they can be selected as two different courses. But in that case, if the L=2, then the maximum would be 5+6=11? Wait, that's a mistake in the previous analysis. Oh, right! Because in this case, course1 and course2 can both be selected if they are on the same day, and not overlapping. So the maximum would be 5+6=11.

But then that would be a better selection. So the test case needs to have courses that are non-overlapping. So why would the answer be 10? That suggests that I made a mistake in the test case.

Wait, this is a problem. Let me think again.

So if the courses are on the same day, but their periods are non-overlapping, then they can be taken together. For example, course1 is on Monday, period 1, runs for 1 frame. Course2 is on Monday, period 2, runs for 1 frame. So they can be selected together. So the maximum is course1 + course2 (t=5+6=11) instead of course2 and course3 (t=6+4=10). So this is a problem.

Ah, I think my earlier test case is flawed because the maximum would be 5+6 = 11, which is higher than the previous thought. That suggests I need to adjust the test case to avoid such cases.

So, what's the correct way to create a test case where the maximum is not selecting overlapping courses on the same day.

Let me re-construct the test case. Let's have courses that cannot be selected together, even if they are on the same day.

For example, course1 is on Monday (d=0), ai=1, ki=2. That would cover periods 1 and 2 on Monday. Then course2 is on Monday (d=0), ai=2, ki=1. So the course1 starts on ai=1, runs for 2 frames. So it's periods 1 and 2. So course2 starts on period 2 and runs for 1 frame. That would overlap with course1, so they can't be taken together. So the user can't take course1 and course2 together.

So for that scenario, if L is 2, the maximum would be course1 (t=5) + course3 (t=4) = 9.

Let me try this.

Let's have:

N=3 (so periods are 1-3 each day)

M=3 courses:

Course1: d=0 (Monday), a=1, k=2, t=5. Runs on Monday 1 and 2.

Course2: d=0 (Monday), a=2, k=1, t=6. So starts on Monday 2. It overlaps with course1.

So cannot be selected with course1.

Course3: d=1 (Tuesday), a=1, k=1, t=4. No overlap with course1. So if L is 2, then the best would be course1 (5) + course3 (4) = 9. Or course2 (6) + course3 (4) = 10, but course2 is on Monday 2, and course3 is on Tuesday. So that's allowed. But course1 and course2 can't be selected together. So the maximum is 6+4=10. Or course2 and course3.

But that's better than course1 and course3.

So in that case, the correct answer would be 10.

Now, this scenario makes sense. So the input would be:

N=3, M=3, L=2.

Then three courses:

0 1 2 5

0 2 1 6

1 1 1 4

Now, the courses are as follows:

course1: d=0 (Monday), starts period 1, runs for 2 frames (periods 1 and 2). So course1 covers Monday periods 1 and 2.

course2: d=0, a=2, k=1: period 2 of Monday, runs 1 frame. So this overlaps with course1.

course3: d=1 (Tuesday), period 1, runs 1 frame.

So, possible selections:

- course2 and course3: t=6+4=10.

- course3 and course1: 5+4=9.

- course2 and course3 is better.

So the correct answer would be 10.

In this case, the test case would be:

Test Input:

3 3 2

0 1 2 5

0 2 1 6

1 1 1 4

Then the Output is 10.

So this should be a valid test case, and the output is 10. That test case would allow the code to correctly distinguish between overlapping and non-overlapping courses.

I need to verify that each course's ai and ki are valid. For example, for course1, ai is 1, ki is 2. Then ai+ki-1 is 1+2-1=2. Which is <=3 (since N=3). So yes, valid.

For course2, ai=2, ki=1: ai + ki -1 = 2+1-1=2, which is within N=3.

So that's correct.

So this test case should be valid and the output is 10.

Therefore, the final test case is:
\texttt{</think>}

\textbf{Test Input:}
\begin{verbatim}
3 3 2
0 1 2 5
0 2 1 6
1 1 1 4
\end{verbatim}

\textbf{Test Output:}
\begin{verbatim}
10
\end{verbatim}

\textbf{Explanation:} We have three courses. The first course runs for two periods (Monday, periods 1 and 2) and provides a happiness of 5. The second course runs for one period (Monday, period 2) and provides a happiness of 6, but overlaps with the first course. The third course runs for one period (Tuesday, period 1) and provides 4. The best combination is the second and third courses, which are non-overlapping. Their total happiness is 6 + 4 = 10, which is the maximum possible.
\end{tcolorbox}

\end{document}